\title{Identifiability Guarantees for Causal Disentanglement from Purely Observational Data}
\author{%
  Ryan Welch\thanks{Equal contributions.}\\
  Massachusetts Institute of Technology\\
  Broad Institute of MIT and Harvard \\
  \And
  Jiaqi Zhang\footnotemark[1]\\
  LIDS, Massachusetts Institute of Technology\\
  Broad Institute of MIT and Harvard \\
  \AND
  Caroline Uhler\\
  LIDS, Massachusetts Institute of Technology\\
  Broad Institute of MIT and Harvard \\
}
\begin{document}

\maketitle

\begin{abstract}
    Causal disentanglement aims to learn about latent causal factors behind data, holding the promise to augment existing representation learning methods in terms of interpretability and extrapolation.
    Recent advances establish identifiability results assuming that interventions on (single) latent factors are available; however, it remains debatable whether such assumptions are reasonable due to the inherent nature of intervening on latent variables. Accordingly, we reconsider the fundamentals and ask what can be learned using just observational data. 
    
    We provide a precise characterization of latent factors that can be identified in nonlinear causal models with additive Gaussian noise and linear mixing, without any interventions or graphical restrictions. In particular, we show that the causal variables can be identified up to a \emph{layer}-wise transformation and that further disentanglement is not possible. 
    We transform these theoretical results into a practical algorithm consisting of solving a quadratic program over the score estimation of the observed data.
    We provide simulation results to support our theoretical guarantees and demonstrate that our algorithm can derive meaningful causal representations from purely observational data.
    


\end{abstract}

\section{Introduction}

Advances in representation learning play a pivotal role in the application of machine learning across various fields, including natural language processing, computer vision, and life sciences (c.f., \cite{bengio2013representation,wainberg2018deep}). The emerging field of causal disentanglement holds the promise to augment such advances by identifying and learning some aspects about the latent causal factors behind data \cite{scholkopf2021toward,kaddour2022causal}. These latent causal factors have been shown to improve the interpretability of high-level concepts behind complex high-dimensional data \cite{lippe2022citris,rajendran2024learning, zhang2024interpretable,wendong2024causal} and enable extrapolation to predict how novel interventions will affect the data \cite{yang2021causalvae,zhang2024identifiability,saengkyongam2023identifying}.


In pursuit of causal disentanglement, two \emph{critical} questions are: (1) to theoretically understand to what extent the latent causal factors are identifiable, and (2) to algorithmically design efficient methods to learn these factors with finite samples. Despite the recent surge of interest in this area, these questions remain difficult given the inherent challenges of both disentanglement and causal discovery. In the disentanglement literature, the latent factors are assumed to be independent, and it is known that identifying them is not possible without further knowledge on the data-generating process~\cite{hyvarinen1999nonlinear}. Relaxing the independence assumption, causal disentanglement considers potentially related latent factors and aims to discover not only the latent factors but also their latent causal relations. Since this extends disentanglement, the latent factors are unidentifiable without additional information. Furthermore, learning causal relations is notoriously challenging as the number of variables grows: the underlying structure is generally not unique \cite{andersson1997characterization}, and it is computationally and sample inefficient to learn complex causal graphs \cite{chickering2004large,wadhwa2021sample}.


To overcome these difficulties, a trend in recent works has been to consider having access to interventional data, where a common assumption is to assume that interventions on all single latent factors are available \cite{squires2022causal,ahuja2023interventional,varici2023score,zhang2024identifiability,von2024nonparametric,varici2024general, buchholz2024learning}. Although a goal of causal disentanglement is to be able to control individual latent factors, it is debatable whether assuming existence of direct interventions on latent factors is reasonable \cite{saengkyongam2023identifying,bing2023identifying,varici2024score,ahuja2024multi}, since one can argue that direct interventions on (single) latent factors make these factors non-latent. Furthermore, it might be infeasible to perform interventions on (some of) the factors due to ethical or cost reasons. As a consequence, it is important to understand what can be achieved solely based on observational data.

In our work, we consider causal disentanglement from purely observational data. The key idea behind our approach is to utilize \emph{asymmetries} in the joint distribution of the latent factors. In particular, we consider latent factors that are generated by an unknown nonlinear causal model with additive Gaussian noises, from which we obtain observations after an unknown linear mixing. Nonlinear models with additive Gaussian noises have been a popular choice in the causal discovery literature due to their flexibility, intriguing identifiability properties (in the fully observed setting), and benign statistical sample complexities \cite{peters2012identifiability,scholkopf2012causal,rolland2022score,zhu2024sample}. These models imply asymmetric relationships between causes and effects, which can be utilized to distinguish causal directions. {Beyond their theoretical properties, these models are commonly chosen to represent real world causal systems, such as gene regulatory networks \cite{imoto2001estimation}, given their ability to fit non-parametric relationships}.


\textbf{Contributions and Organization.} We define nonlinear additive Gaussian noise models in the context of causal disentanglement in Section~\ref{sec:2}. We provide a precise characterization of latent factors that can be identified in such models, with purely observational data and no graphical restrictions. In particular, we show that the latent variables can be identified up to a layer-wise transformation that is consistent with the underlying causal ordering, and that further disentanglement is not possible. These results are provided in Section~\ref{sec:3}. We transform these theoretical results into practical algorithms in Section~\ref{sec:4} by building upon recent successes of combining score matching and causal discovery \cite{rolland2022score,montagna2023scalable} to devise a method that solves a quadratic program over the score estimation of the observed data. The resulting algorithm enjoys efficiency and flexibility to be combined with any existing off-the-shelf score estimation method. We demonstrate our results empirically with simulations in Section~\ref{sec:5}, and conclude with a discussion in Section~\ref{sec:6}.

\subsection{Related Work}

\textbf{Causal disentanglement.} Previous works in causal disentanglement have mostly considered varying assumptions on: the available data, the underlying causal model of the latent factors, and the mixing function between latent factors and observed data. Assuming the availability of interventional data, \cite{liu2022identifying,squires2022causal,buchholz2024learning} established results for parametric causal models, whereas \textcolor{black}{\cite{varici2023score,ahuja2023interventional,zhang2024identifiability,von2024nonparametric, jiang2024learning,varici2024general}} studied non-parametric causal models. Most of these works assume linear mixing (or a special case of polynomial mixing that can be easily reduced to linear functions), with the exception of \textcolor{black}{\cite{buchholz2024learning,von2024nonparametric, jiang2024learning,varici2024general}}, where stronger assumptions on either the parametric causal model or more interventions are required to compensate for the general mixing functions. Prior to these works, \cite{ahuja2022weakly,brehmer2022weakly} established results assuming counterfactual data, which usually leads to stronger identifiability as one can now contrast counterfactual pairs. 

Few recent works considered identifiability without interventions~
\cite{sturma2024unpaired,yao2023multi,xu2024sparsity}. These works typically assume that (parts of) the latent factors can be observed after multiple different mixing functions. In the case where only one observational dataset is available, which is the setting of this paper, previous works have obtained results assuming both parametric models as well as additional structural restrictions on the mixing function~\cite{cai2019triad,kivva2021learning,xie2020generalized, xie2022identification,kong2024identification}.
{Such structural restrictions refer to constraints on the set of latent variables that determine each observed variable, which is distinct from functional restrictions on the mixing function such as linearity.} An example of such restrictions is the pure child assumption~\cite{silva2006learning, halpern2015anchored}, specifying that each observed variable has only one latent parent. To the best of our knowledge, \emph{our work is the first to establish identifiability guarantees {of causal disentanglement} in the purely observational setting without imposing any structural assumptions over the mixing function}. We summarize these comparisons to prior works in Table~\ref{tab:tab0}.

{We additionally recognize that identifiability of latent factors from purely observational data has been considered outside of causal disentanglement \cite{rolland2022score,kivva2022identifiability}. However, these results do not extend to the setting considered in this work, given the assumed data generating processes to do encapsulate the causal graph.} 

\begin{table}[t]
    \caption{\textbf{Comparison of our results to prior works on causal disentanglement.} For the latent model, \emph{L} stands for linear mechanisms whereas \emph{NL} stands for nonlinear mechanisms; \emph{G} stands for Gaussian noise whereas \emph{NG} stands for non-Gaussian noise; \emph{Discrete} refers to discrete causal variables. Here, we summarize the identifiability results in terms of latent causal graph identification.\vspace{.1in}}
    \label{tab:tab0}
    \centering
    \begin{tabular}{lllll}
    \toprule
    & Data & Latent Model & Structural Mixing & Identifiability Results\\
    \midrule
    \cite{ahuja2022weakly,brehmer2022weakly} & Counterfact. & General & \textbf{No} & Fully identifiable.\\
    \cite{squires2022causal,buchholz2024learning} & Hard Interv. & LG & \textbf{No} & Fully identifiable.\\
    \textcolor{black}{\cite{ahuja2023interventional,varici2024score,varici2024general,von2024nonparametric}} & \textcolor{black}{Hard Interv(s).} & \textcolor{black}{General} & \textcolor{black}{\textbf{No}} & \textcolor{black}{Fully identifiable.}\\
    \textcolor{black}{\cite{varici2024score,zhang2024identifiability}} & Soft Interv. & General & \textbf{No} &Up to transitive closure.\\
    \cite{sturma2024unpaired} & Multi-view & LG & \textbf{No} & Block-wise identifiable. \\
    \cite{yao2023multi} & Multi-view & NL & \textbf{No} &  Block-wise identifiable. \\
    \cite{halpern2015anchored, kivva2021learning} & \textbf{Purely Obs.} & Discrete & Yes & Up to Markov equivalence. \\
    \cite{cai2019triad, xie2020generalized,xie2022identification}& \textbf{Purely Obs.} & LNG & Yes & Fully identifiable. \\ 
    \cite{kong2024identification} & \textbf{Purely Obs.} & NL & Yes & Fully identifiable. \\
    \textbf{This work} & \textbf{Purely Obs.} & NLG& \textbf{No} & 
    Up to causal layers.\\
    \bottomrule
    \end{tabular}
\end{table}




\textbf{Score matching in causal discovery.} Since our algorithm builds upon discovery methods using score matching, we briefly review these approaches. Works in this direction have mainly focused on causal discovery when all causal variables are observable in identifiable paramteric causal models such as nonlinear additive Gaussian noise or additive non-Gaussian noise models \cite{rolland2022score,montagna2023scalable,reizinger2022jacobian,sanchez2022diffusion, montagna2023causal}. These methods first learn a topological ordering of the causal variables using the second-order derivative of the log-likelihood estimated from score matching. They then apply regression based DAG pruning techniques \cite{buhlmann2014cam, marra2011practical} to retrieve the full causal structure. {We note that these works do not inherently extend to causal disentanglement, for they assume direct access to the causal variables that disentanglement intends to learn.}

{Expanding} these ideas to causal disentanglement is difficult, since we do not observe the latent factors and can only estimate the log-likelihood of the observed variables. Surprisingly, our theory suggests a simple principle to obtain meaningful estimates of the latent factors from the log-likelihood of the observed variables. Moreover, we show that a simple quadratic program can be used to implement this principle, which leads to an efficient algorithm borrowing strength from both nonlinear optimization and machine learning.

{Our principle for identifiability directly expands the main result from \cite{rolland2022score}, where variance properties on the diagonal elements of the Jacobian over the score of causal variables are used to derive a topological ordering. While we utilize this result, it is not sufficient for disentanglement given the aforementioned Jacobian can only be determined up to an unknown quadratic form when the causal variables are unobserved. Therefore, our principle additionally relies on properties of the entire Jacobian matrix, which we present in Section~\ref{sec:3-2}.}

{Additionally, we recognize the inherent difficulties of both non-convex optimization and second-order score estimation essential for modern score matching methods, which we discuss further in Section~\ref{sec:4-1} and Section~\ref{sec:5-1} respectively.}





\section{Setup}\label{sec:2}


We now formally define the causal disentanglement problem and introduce relevant definitions. We consider the {observed} variables $X = \left(X_1, ..., X_d\right)^\top \in \mathbb{R}^{d\times 1}$ as generated from the {latent} causal factors $Z = \left(Z_1, ..., Z_n\right)^\top \in \mathbb{R}^{n\times 1}$ via an unknown invertible linear mixing. {We do not assume that the latent dimension $n$ is known a priori, but rather can be learned as given by the principle presented in Lemma 1 of \cite{zhang2024identifiability}.} These latent factors follow a joint distribution $p(\cdot)$, which factorizes according to an unknown directed acyclic graph (DAG) $\mathcal{G}$. We summarize the setup in the following assumption.


\begin{assumption}[\textbf{Linear mixing}]\label{ass:1}
Our data-generating process can be written as
\begin{align*}
    X  = H \cdot Z, \qquad  Z \sim p(Z) = \prod_{i=1}^n p(Z_i | Z_{pa(i)}),
\end{align*}
where $H \in \mathbb{R}^{d \times n}$ has full column rank and $pa(i)$ denotes the parents of node $i$ in $\cG$. 
\end{assumption}

{We assume linear mixing as it is essential for our theoretical guarantees presented in Section~\ref{sec:3-2}. However, our results also extend to settings where the true mixing function can be reduced to a linear map, such as in the case of a special class of polynomials \cite{ahuja2023interventional,zhang2024identifiability}.} For the distribution of $Z$, we consider nonlinear additive Gaussian noise models as follows.
\begin{assumption}[\textbf{Nonlinear additive Gaussian noise model.}]\label{ass:2}
The factorization term in the joint distribution over $Z$ is specified by
\begin{align*}
    Z_i = f_i(Z_{pa(i)}) + \mathcal{E}_i,\qquad \forall i \in [n],
\end{align*}
where $f = \{ f_i : i \in [n] \}$ are twice continuously differentiable, non-linear\footnote{To ensure that $f_i$ are non-degenerate, we assume that they are ``directional non-linear'', i.e., there does \emph{not} exist $\beta\in\mathbb{R}^{|pa(i)|}$ with $\|\beta\|_2=1$ such that $\partial_{\beta,\beta}^2 f_i(z)=0$ for all $z\in\mathbb{R}^{|pa(i)|}$.} functions that capture the dependence of $Z_i$ on its parents, and $\mathcal{E} = \{ \mathcal{E}_i : i \in [n] \}$ denote exogenous noise variables, which are mutually independent and mean-zero Gaussians, i.e., $\mathcal{E}_i \sim \mathcal{N}(0, \sigma_i^2)$.
\end{assumption}

We use $p_X(\cdot)$ to denote the induced distribution over the observed variables $X$. We consider causal disentanglement from purely observational data, where we only have access to a dataset consisting of samples from $p_X(\cdot)$. Our goal is to learn the most about $Z$ (or equivalently, $\cE, \cG$, and $f$) using this dataset. {We additionally note that this problem has also been called \emph{causal representation learning} in literature}. Figure~\ref{fig:a} illustrates the described setup.


\begin{figure}[h]
    \centering
    \begin{minipage}{0.52\textwidth}
        \includegraphics[width=\textwidth]{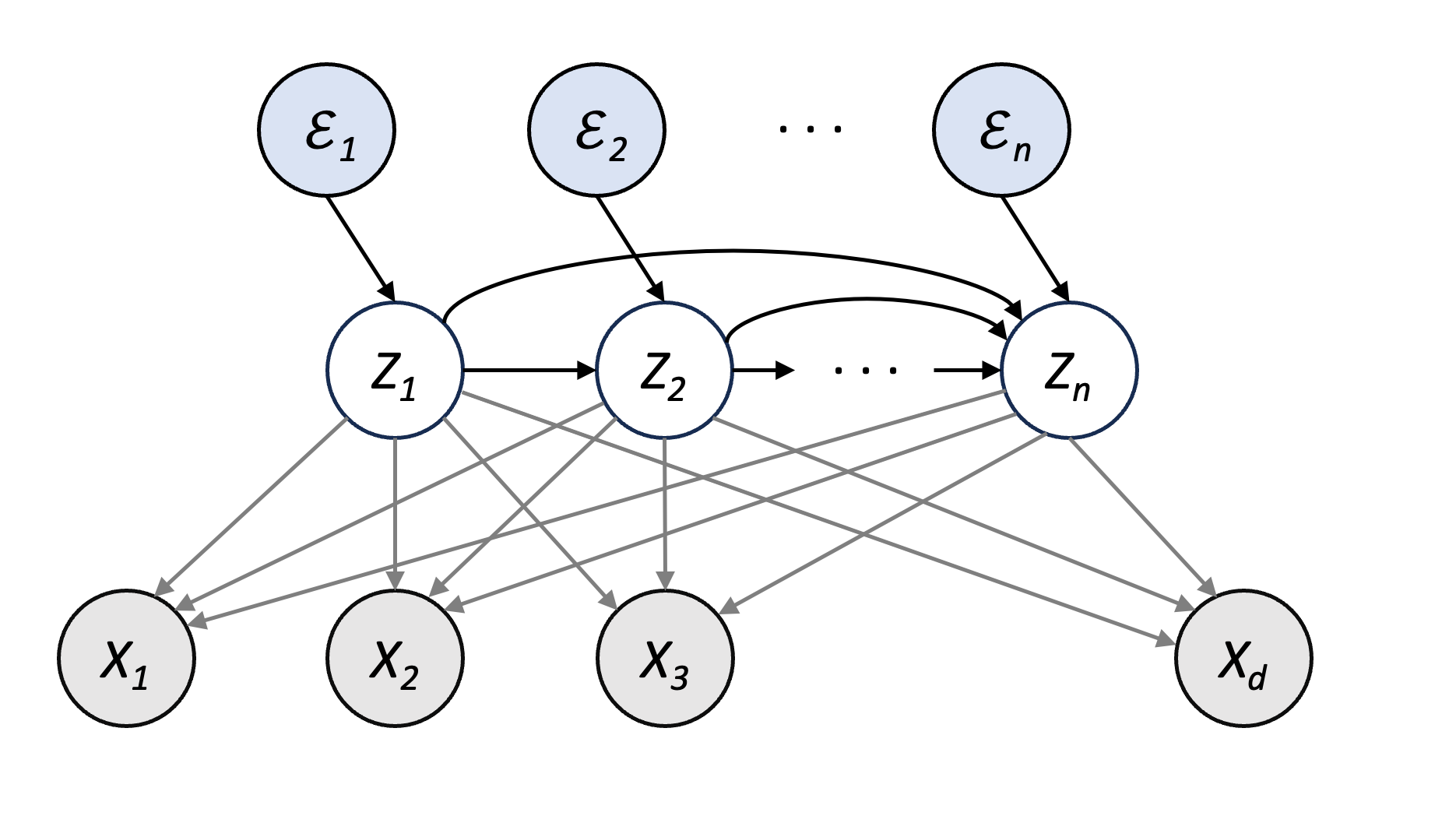}
        \caption{\textbf{The considered data-generating process.} The latent variables $Z$ follow a nonlinear causal model with additive Gaussian noises. We observe them after an unknown linear mixing (gray edges).}\label{fig:a}
    \end{minipage}
    \hfill
    \begin{minipage}{0.45\textwidth}
    \centering        \includegraphics[width=0.70\textwidth]{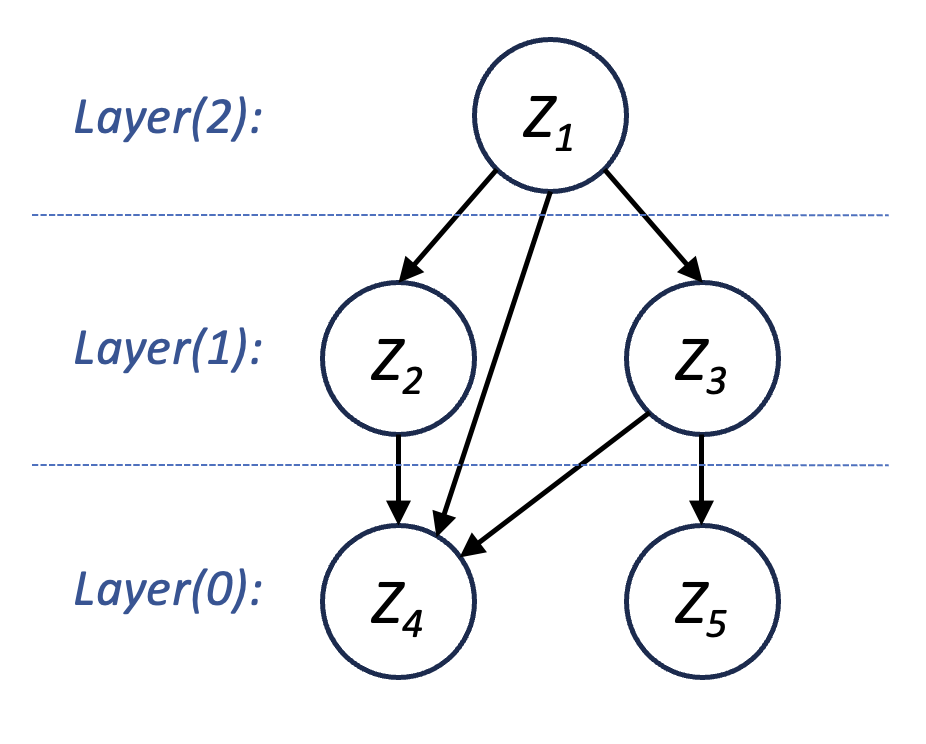}
        \caption{\textbf{Layers of the causal DAG.} A latent variable is contained in $layer (k)$ if its longest path to a leaf node is $k$.}\label{fig:b}
    \end{minipage}
    \label{fig:images}
\end{figure}

\textbf{Estimators.} We denote generic estimators of $Z$ and $\mathcal{E}$ from $X$ by $\hat{Z}(X) : \mathbb{R}^d \rightarrow \mathbb{R}^n$ and $\hat{\mathcal{E}}(X) : \mathbb{R}^d \rightarrow \mathbb{R}^n$ respectively. In our setup, these estimators are constructed by learning the inverse of the unknown mixing matrix $H$. We denote a valid estimate of this mixing matrix by $\hat{H}\in \mathbb{R}^{d \times n}$, and its Moore-Penrose inverse by $\hat{H}^{\dagger} = \hat{H}^\top \cdot (\hat{H}\hat{H}^\top)^{-1}$. To obtain an estimate of $Z$, we use $\hat{Z}(X) = \hat{H}^{\dagger} \cdot X$. For simplicity, we denote the transformation from the estimated latent $\hat{Z}(X)$ to the true latent factors $Z$ by the matrix $\beta \in \mathbb{R}^{n \times n}$, where $\beta = H^{\dagger} \cdot \hat{H}$ and $Z = \beta \cdot \hat{Z}(X)$.

\textbf{Graph notation.}
We use $ch(i)$, $an(i)$ and $de(i)$ to denote the children, ancestors and descendants of node $i$ in $\cG$, respectively. Node $i$ is called a root node if $an(i)=\varnothing$, and a leaf node if $de(i)=\varnothing$. We define the $k^{th}$ layer of $\mathcal{G}$, denoted by $layer(k)$, to be the set of all nodes whose longest path to a leaf node is $k$. Figure~\ref{fig:b} illustrates this concept. With a slight abuse of notation, we will interchangeably use $Z_i\in layer(k)$ to denote $i\in layer(k)$.


\section{Identifiability Results}\label{sec:3}


In this section, we present our main theoretical results. We start by providing a precise characterization of latent factors that are identifiable in Section~\ref{sec:3-1}. We then demonstrate identifiability by providing a constructive proof in Section~\ref{sec:3-2}. Counterexamples showing that further disentanglement is not possible and our results cannot be strengthened are given in Section~\ref{sec:3-3}.  Detailed proofs are deferred to Appendix~\ref{app:A}. Throughout this section, we consider the infinite-data regime where enough samples are obtained to exactly determine the observational distribution $p_X(\cdot)$.


\subsection{Layer-wise Transformations}\label{sec:3-1}

For each latent causal factor $Z_i$, we show that its identifiability is dependent on the layer of the corresponding node. Specifically, we show that $Z_i\in layer(k)$ can be identified up to a linear combination of all variables in {$layer(k) \cup layer(k+1) \cup \dots \cup layer(r)$, where $r$ denotes the top most layer}. The formal definition is as follows.

\begin{definition}[\textbf{Identifiability up to upstream layers}] The latent causal variables $Z$ are \textup{identifiable up to upstream layers} if it is possible to learn $\hat{Z}(X)$ from $p_X(\cdot)$ such that: 
\begin{align*}
        \hat{Z}(X) = P_\pi \cdot C \cdot Z, \qquad\forall Z \in \mathbb{R}^n,
\end{align*}   
where $P_\pi\in\mathbb{R}^{n \times n}$ is a permutation matrix, and $C \in \mathbb{R}^{n \times n}$ is a constant matrix with non-zero diagonal entries and $[C]_{i,j}$ = 0 for all $i,j$ such that $i\in layer(k)$ and $j {\in} \cup_{l\leq k} layer(l)$.
\label{def:def1}
\end{definition}

This identifiability notion implies that each causal variable can be learned up to a linear combination that does not depend on its descendants. Intuitively, this implies that variables that are more upstream in the underlying causal DAG can more easily be identified. In particular, the root nodes (i.e., the most upstream causal factors) can be identified up to a linear transformation of themselves. 

Beyond this $Z$-based notion of identifiability, we can further disentangle the exogenous noise variables up to a transformation that depends only on its own layer. The formal definition is as follows.

\begin{definition}[\textbf{Identifiability up to layers}] The exogenous noise variables $\mathcal{E}$ are \textup{identifiable up to layers} if it is possible to learn $\hat{\mathcal{E}}(X)$ from $p_X(\cdot)$ such that:
\begin{align*}
    \hat{\mathcal{E}}(X) = P_\pi \cdot C \cdot \mathcal{E}, \qquad \forall \mathcal{E} \in \mathbb{R}^n,
\end{align*}
where $P_\pi \in\mathbb{R}^{n \times n}$ is a permutation matrix, and $C \in \mathbb{R}^{n \times n}$ is a constant matrix with non-zero diagonal entries and $[C]_{i,j}$ = 0 for all $i,j$ such that $i\in layer(k)$ and $j\not\in layer(k)$.
\end{definition}

Next, we prove these notions of identifiability in a constructive way using the score function of $p_X(\cdot)$.




\subsection{Identification via Score Functions}\label{sec:3-2}

Our analysis will rely on the \textit{score function} of the observational distribution of $X$, denoted by
\begin{align*}
    s_X(x) = \nabla_x \log p_X(x),
\end{align*}
as well as its \textit{Jacobian matrix} whose $ij^{th}$ entry is given by $\left[J_X(x)\right]_{ij} = \nabla_{x_i} \nabla_{x_j} \log p_X(x)$.
Since $X$ and $Z$ are related through a linear transformation, we can easily write out the closed form for both the score and associated Jacobian of the latent variables $Z$ as follows.





\begin{lemma}\label{lem:1}\footnote{Similar results have been used in \cite{varici2024general,varici2024score}, where \cite{varici2024score} provided formulas for general mixings.}
    Under Assumption~\ref{ass:1}, the score functions and associated Jacobian matrices over $X$ and $Z$ are related via the following transformations:
    \begin{align*}
        &s_Z(z) = H^\top s_X(x),\qquad J_Z(z) = H^\top J_X(x) H.
    \end{align*}
\end{lemma}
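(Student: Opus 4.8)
The plan is to establish both identities by a change-of-variables argument, treating the square and rectangular cases separately, where the rectangular case demands care about the support of $p_X$. Throughout, recall $x = Hz$ so that $z$ is a function of $x$ on the relevant support.

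First I would handle the invertible case $d = n$, where $H$ is square and nonsingular. Here $z = H^{-1}x$, and the standard change-of-variables formula gives $p_X(x) = p_Z(H^{-1}x)\,|\det H|^{-1}$, so $\log p_X(x) = \log p_Z(H^{-1}x) - \log|\det H|$. Differentiating once in $x$, the constant term drops out and the chain rule yields $s_X(x) = (H^{\top})^{-1} s_Z(z)$, i.e. $s_Z(z) = H^\top s_X(x)$. Differentiating a second time, again by the chain rule, produces $J_X(x) = (H^{\top})^{-1} J_Z(z) H^{-1}$, whence $J_Z(z) = H^\top J_X(x) H$.

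Next I would extend to the rectangular full-column-rank case $d > n$, which is where I expect the only genuine obstacle to lie: since $X = HZ$ is supported on the $n$-dimensional subspace $\mathrm{col}(H) \subset \mathbb{R}^d$, the law of $X$ is singular with respect to Lebesgue measure on $\mathbb{R}^d$, so $s_X = \nabla_x \log p_X$ must be read as the intrinsic within-subspace gradient embedded back into $\mathbb{R}^d$ (and $J_X$ as the corresponding embedded Hessian). To make this precise I would factor $H = QR$ with $Q \in \mathbb{R}^{d\times n}$ having orthonormal columns spanning $\mathrm{col}(H)$ (so $Q^\top Q = I_n$) and $R \in \mathbb{R}^{n\times n}$ invertible. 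Writing $x = Qv$ in coordinates $v \in \mathbb{R}^n$ gives $v = Rz$, which reduces the problem to the invertible case in the variable $v$; the square-case computation then yields, after embedding via $Q$, that $s_X(x) = Q(R^{\top})^{-1} s_Z(z)$ and $J_X(x) = Q(R^{\top})^{-1} J_Z(z) R^{-1} Q^\top$.

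Finally I would collapse the orthonormal factors using $Q^\top Q = I_n$ together with $H = QR$: computing $H^\top s_X(x) = R^\top Q^\top Q (R^{\top})^{-1} s_Z(z) = s_Z(z)$ and similarly $H^\top J_X(x) H = R^\top (R^{\top})^{-1} J_Z(z) R^{-1} R = J_Z(z)$, which are exactly the claimed identities. I would close by remarking that these relations are insensitive to how one extends the score off the support, since any component of $s_X$ lying in $\mathrm{col}(H)^{\perp} = \ker(H^\top)$ is annihilated by left-multiplication by $H^\top$; thus once the singular-support definition of $s_X$ and $J_X$ is fixed, the remaining algebra is routine.
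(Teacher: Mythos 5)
Your proof is correct and follows the same basic route as the paper's: change of variables for the density, then differentiate the log-density once and twice using the chain rule. The difference worth noting is in the rectangular case $d > n$. The paper's proof simply writes $p_X(x) = p(H^{\dagger}x)\,|\det(H^{\dagger})|$ and differentiates, which is only literally meaningful when $H$ is square (the determinant of a rectangular matrix is undefined, and for $d>n$ the law of $X$ is singular with respect to Lebesgue measure on $\mathbb{R}^d$, so $\nabla_x \log p_X$ needs an interpretation). You confront this head-on: the QR factorization $H = QR$ reduces the problem to the invertible case in intrinsic coordinates on $\mathrm{col}(H)$, and the orthonormal factor $Q$ then cancels against $H^\top$ via $Q^\top Q = I_n$, recovering exactly the claimed identities. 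Your closing observation that any component of $s_X$ in $\ker(H^\top) = \mathrm{col}(H)^{\perp}$ is annihilated by $H^\top$ is a nice touch, since it shows the identities are insensitive to how the score is extended off the support. In short, you prove the same lemma by the same mechanism, but your treatment is the rigorous version of the paper's formal calculation in the degenerate-support regime; the paper's version is shorter and suffices for $d=n$ or for readers willing to read $|\det(H^\dagger)|$ as the volume factor $\det(H^\top H)^{-1/2}$.
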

For an estimator $\hat{Z}(X)=\hat{H}\cdot X$, we utilize Lemma~\ref{lem:1} to obtain the following:
\begin{align*}
    J_{\hat{Z}}(\hat{z}) = \hat{H}^\top J_X(x) \hat{H} = \beta^\top J_Z(z) \beta.
\end{align*}
This shows that we can compute $J_{\hat{Z}}$ once we estimate $\hat{H}$ and $J_X$ from $p_X(\cdot)$, and that $J_{\hat{Z}}$ relates to the Jacobian matrix over the true latent variables, $J_Z$, via a quadratic form $J_{\hat{Z}} = \beta^\top J_Z \beta$, where $\beta$ is a product of the unknown $H^\dagger$ and $\hat{H}$.

Under the nonlinear additive Gaussian noise model in Assumption~\ref{ass:2}, \cite{rolland2022score} demonstrated that the $i^{th}$ diagonal element of $J_Z$ will have zero variance if and only if node $i$ is a leaf node in $\mathcal{G}$. Building on this result, we can derive a sufficient and necessary condition for when the $i^{th}$ diagonal element of $J_{\hat{Z}}$ will have zero variance involving the unknown matrix $\beta$ as follows.

\begin{lemma}\label{lem:2}
    The $i^{th}$ diagonal element of $[J_{\hat{Z}}(\hat{z})]_{ii}$ has zero variance, i.e., $\Var  \left( [J_{\hat{Z}}(\hat{z})]_{ii} \right) = 0$, if and only if the $i^{th}$ column of $\beta$ has zero entries in every element corresponding to non-leaf nodes.
\end{lemma}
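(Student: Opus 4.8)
The plan is to reduce the variance condition to a statement about a single directional second derivative of the latent log-density, and then prove the converse by isolating one noise coordinate. Writing $b := \beta_{\cdot i}$ for the $i$-th column of $\beta$, the relation $J_{\hat Z} = \beta^\top J_Z \beta$ established just before Lemma~\ref{lem:2} gives $[J_{\hat Z}(\hat z)]_{ii} = b^\top J_Z(z)\, b = \partial_b^2 \log p_Z(z)$, where $\partial_b := \sum_k b_k \partial_{z_k}$ is the directional derivative along $b$. So the claim becomes: $\partial_b^2 \log p_Z$ is almost surely constant in $z$ if and only if $b_k = 0$ for every non-leaf node $k$.

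First I would make the log-density explicit. Under Assumption~\ref{ass:2}, $\log p_Z(z) = \mathrm{const} - \sum_m \frac{1}{2\sigma_m^2}\mathcal{E}_m^2$ with $\mathcal{E}_m = z_m - f_m(z_{pa(m)})$. Differentiating twice along $b$, and using $\partial_b \mathcal{E}_m = b_m - D_m$ together with $\partial_b D_m = H_m$, where $D_m := \sum_{k\in pa(m)} b_k \partial_{z_k} f_m$ and $H_m := \sum_{k,l\in pa(m)} b_k b_l \partial^2_{z_k z_l} f_m$ are the first and second directional derivatives of $f_m$ along $b|_{pa(m)}$, yields the key closed form
\begin{align*}
b^\top J_Z b = -\sum_m \frac{1}{\sigma_m^2}\Big[(b_m - D_m)^2 - \mathcal{E}_m H_m\Big].
\end{align*}
For the easy direction, suppose $b$ is supported on leaves. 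Every parent is a non-leaf, so $b_k = 0$ for all $k \in pa(m)$, hence $D_m \equiv H_m \equiv 0$ for all $m$. The display collapses to the constant $-\sum_m b_m^2/\sigma_m^2$, so $\Var([J_{\hat Z}]_{ii}) = 0$.

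The hard direction, and the main obstacle, is the converse: if $b_j \ne 0$ for some non-leaf $j$, then the variance is strictly positive. The plan is to isolate one noise variable. Let $T := \{m : pa(m)\cap \mathrm{supp}(b) \ne \varnothing\}$; this is nonempty because the non-leaf $j$ has a child. I would pick $m^* \in T$ that is maximal in the ancestor order, i.e.\ no strict descendant of $m^*$ lies in $T$. The crucial structural point is that $\mathcal{E}_{m^*}$ enters $b^\top J_Z b$ only through the single term $\frac{1}{\sigma_{m^*}^2}\mathcal{E}_{m^*}H_{m^*}$: although $\mathcal{E}_{m^*}$ propagates into $z_{de(m^*)}$, maximality of $m^*$ forces $D_{m'} \equiv H_{m'} \equiv 0$ for every strict descendant $m'$, so those terms reduce to constants and carry no dependence on $\mathcal{E}_{m^*}$, while terms for non-descendants never see $z_{m^*}$ at all.

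Since $\mathcal{E}_{m^*}\sim\mathcal{N}(0,\sigma_{m^*}^2)$ is independent of $H_{m^*}$ (a function of the non-descendant coordinates $z_{pa(m^*)}$), I would condition on all the other noise variables and apply the law of total variance to obtain $\Var(b^\top J_Z b) \ge \frac{1}{\sigma_{m^*}^2}\,\mathbb{E}[H_{m^*}^2]$. Finally, $b|_{pa(m^*)}\ne 0$ by the choice of $m^*$, so the directional non-linearity assumption guarantees $H_{m^*}\not\equiv 0$; continuity of $H_{m^*}$ together with the full support of $Z$ (Gaussian noise through a continuous map) then forces $\mathbb{E}[H_{m^*}^2] > 0$, completing the argument. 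I expect the delicate part to be this isolation step, namely verifying rigorously that no other term reintroduces $\mathcal{E}_{m^*}$, which is exactly where the maximal choice of $m^*$ does the work.
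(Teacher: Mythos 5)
Your proof is correct and follows essentially the same route as the paper's: both directions reduce to the quadratic form $b^\top J_Z b$, the backward direction uses that leaves have no children so all nonlinear terms vanish, and the forward direction isolates the exogenous noise of a maximally downstream child of $\mathrm{supp}(b)$ and invokes the directional non-linearity assumption together with the full support of $Z$. Your closed-form expression $-\sum_m \sigma_m^{-2}\bigl[(b_m-D_m)^2-\mathcal{E}_m H_m\bigr]$ and the law-of-total-variance lower bound are a cleaner packaging of the paper's explicit decomposition $h(z_{-i_c})\cdot\mathcal{E}_{i_c}+g(z_{-i_c})$ and its variance expansion, but the underlying argument is the same.
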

This result provides the intuition that leads to the principle for achieving identifiability. In particular, if we maximize the number of zero-variance terms in the diagonal elements of the estimated Jacobian $J_{\hat{Z}}$, then the unknown matrix $\beta$ must have a maximum number of columns with zeros in all indices corresponding to non-leaf nodes. Since $Z=\beta\cdot\hat{Z}$, we can derive the relation between $\hat{Z}$ and $Z$ under this maximization, which we summarize in the following lemma.

\begin{lemma}\label{lem:3}
    If we learn $\hat{H}$ by solving
    \begin{equation}\label{eq:opt}
    \begin{aligned}
             \min_{\hat{H} \in \mathbb{R}^n}\quad &\left\| \Var\Big(\diag(J_{\hat{Z}}\big(\hat{H}^{\dagger}x)\big)\Big)\right\|_0, \\
        \textup{such that}\quad& \textup{rank}(\hat{H}) = n,   
    \end{aligned}
    \end{equation}
    then it follows that
    \begin{align*}
        \hat{Z}_i = 
        \begin{cases}
            \textup{linear}(Z_{non-leaf}) & \text{if  } \: \: \Var \left( \left[J_{\hat{Z}}(\hat{z})\right]_{ii}\right) \neq 0,\\
            \textup{linear}(Z)  & \text{if  } \: \: \Var \left( \left[J_{\hat{Z}}(\hat{z})\right]_{ii}\right) = 0,
        \end{cases}
    \end{align*}
    where the number of $i \in [n]$ such that $\Var \left( \left[J_{\hat{Z}}(\hat{z})\right]_{ii}\right) = 0$ equals to the number of leaf nodes in $\mathcal{G}$.
\end{lemma}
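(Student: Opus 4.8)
The plan is to reduce the combinatorial objective in \eqref{eq:opt} to a statement about the support pattern of the invertible matrix $\beta$, and then to read off the dependence of $\hat Z$ on $Z$ by explicitly inverting $\beta$. First I would note that since $\hat H$ has rank $n$ and $H$ has full column rank, $\beta = H^\dagger\hat H\in\mathbb R^{n\times n}$ is invertible, so $Z = \beta\hat Z$ gives $\hat Z = \beta^{-1}Z$. Every $\hat Z_i$ is therefore automatically a linear combination of the entries of $Z$, and the whole content of the lemma is about \emph{which} coordinates of $Z$ survive. Writing $L$ for the set of leaf nodes, Lemma~\ref{lem:2} says $\Var([J_{\hat Z}]_{ii}) = 0$ precisely when the $i$-th column of $\beta$ is supported on the leaf rows $L$ only; hence minimizing $\|\Var(\diag(J_{\hat Z}))\|_0$ is the same as maximizing the number of columns of $\beta$ supported only on $L$.

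Next I would establish the count. The columns supported only on leaf rows lie in the coordinate subspace spanned by $\{e_j : j\in L\}$, of dimension $|L|$; since $\beta$ is invertible its columns are linearly independent, so at most $|L|$ of them can lie there. This bound is attained by the feasible point $\hat H = H$, for which $\beta = H^\dagger H = I$ and exactly the $|L|$ leaf-indexed columns $e_i$ are supported on $L$. Thus every minimizer has exactly $|L|$ indices with $\Var([J_{\hat Z}]_{ii}) = 0$, giving the claimed count.

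Then I would extract the structure. At a minimizer, after reordering rows so leaf coordinates come first and columns so the $|L|$ zero-variance columns come first, Lemma~\ref{lem:2} forces the (non-leaf rows)$\times$(zero-variance columns) block to vanish, so $\beta$ is block upper-triangular,
\begin{align*}
\beta = \begin{pmatrix}\beta_{LL} & \beta_{LN}\\ 0 & \beta_{NN}\end{pmatrix}.
\end{align*}
Both diagonal blocks are invertible: a singular $\beta_{LL}$ would make the zero-variance columns (which already vanish on the non-leaf rows) linearly dependent, contradicting $\det\beta\neq 0$, and then $\det\beta = \det\beta_{LL}\det\beta_{NN}$ forces $\beta_{NN}$ invertible too. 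Inverting the block-triangular form, $\gamma := \beta^{-1}$ inherits a vanishing lower-left block, so the rows of $\gamma$ indexed by nonzero-variance coordinates have zeros in all leaf columns. Reading $\hat Z = \gamma Z$ coordinatewise gives the two cases: a nonzero-variance $\hat Z_i$ is a linear combination of the non-leaf $Z_j$ only, whereas each of the $|L|$ zero-variance $\hat Z_i$ may depend on all of $Z$; the reordering is absorbed into the permutation $P_\pi$ of the identifiability definitions.

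The hard part will be the passage from the optimal \emph{count} to the exact \emph{block structure}: I must argue that at a minimizer the zero-variance columns genuinely span a full leaf basis, so that $\beta_{LL}$ is invertible rather than merely having the right sparsity, and that this then forces $\beta_{NN}$ to be invertible as well. Once invertibility of the diagonal blocks is secured, the block inversion and the coordinatewise reading of $\hat Z = \beta^{-1}Z$ are routine linear algebra.
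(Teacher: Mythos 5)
Your proposal is correct and follows essentially the same route as the paper's proof: invoke Lemma~\ref{lem:2} to translate zero-variance diagonal entries into columns of $\beta$ supported on the leaf rows, put $\beta$ into block-triangular form, and invert it blockwise to read off which coordinates of $Z$ each $\hat Z_i$ depends on. If anything, you supply two details the paper states without argument --- that the optimal count is exactly $|L|$ (via linear independence of the columns of $\beta$ plus attainment at $\hat H = H$, where $\beta = I$) and that the diagonal blocks of $\beta$ are invertible --- so no changes are needed.
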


It follows from this lemma that we can obtain representations of all non-leaf nodes as linear transformations of all non-leaf latent variables (i.e. {$layer(1)$} and above). In other words, we can disentangle the leaf nodes out from the non-leaf nodes. Given this identified linear transformation of the non-leaf nodes, we can iteratively apply Lemma~\ref{lem:3} to prune representations of each variable as a linear combination of all variables in its own and upstream layers. This leads to our main theorem.

\setcounter{theorem}{0}
\begin{theorem}\label{thm:1}
Under Assumptions~\ref{ass:1} and~\ref{ass:2}, the latent variables $Z$ are identifiable up to their upstream layers from purely observational data.
\end{theorem}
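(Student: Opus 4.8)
The plan is to prove Theorem~\ref{thm:1} by iterating Lemma~\ref{lem:3}, peeling off one causal layer at a time from the bottom of $\mathcal{G}$, with correctness established by induction on the depth $r$. Lemma~\ref{lem:3} already accomplishes a single peel: solving \eqref{eq:opt} yields $n$ estimated coordinates, of which exactly $|layer(0)|$ (the zero-variance ones) are linear combinations of \emph{all} latent variables, while the remaining $n-|layer(0)|$ (the nonzero-variance ones) are linear combinations of the non-leaf variables only. The former already meet Definition~\ref{def:def1} for $layer(0)$, since the ``upstream layers'' of a leaf comprise every layer, so these estimates are finished. The latter I would treat as a fresh instance of the same estimation problem and recurse on.

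For the recursion to be legitimate I first need that the nonzero-variance estimates again form an invertible linear mixing of an additive Gaussian noise model. Writing $\hat{Z}=\beta^{-1}Z$ and ordering coordinates as (non-leaf, leaf), Lemmas~\ref{lem:2} and~\ref{lem:3} force $\beta^{-1}$ into the block-triangular form $\left(\begin{smallmatrix} A & 0 \\ B & D \end{smallmatrix}\right)$, where $A$ maps the non-leaf latents to the non-leaf estimates and the vanishing off-diagonal block records that non-leaf estimates do not depend on leaf latents. Since $\hat{H}$ has rank $n$, the matrix $\beta^{-1}$ is invertible, hence so are both diagonal blocks $A$ and $D$; in particular the non-leaf estimates equal $A\,Z_{\text{non-leaf}}$ with $A$ invertible. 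They therefore constitute a valid full-rank mixing in the sense of Assumption~\ref{ass:1} of the non-leaf latents, whose joint law (and thus score and Jacobian) is computable from $p_X(\cdot)$ because these estimates are linear functions of $X$.

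Next I need the non-leaf latents to themselves obey a nonlinear additive Gaussian noise model on the induced subgraph. Because every leaf has no children, no non-leaf variable depends on a leaf, so marginalizing out $Z_{layer(0)}$ preserves the factorization $p(Z_{\text{non-leaf}})=\prod_{i\,\text{non-leaf}}p(Z_i\mid Z_{pa(i)})$, which is exactly Assumption~\ref{ass:2} on $\mathcal{G}$ restricted to the non-leaf nodes. Moreover, deleting the leaves turns precisely the original $layer(1)$ nodes into the new leaves, and iterating, the leaves of the $k$-times-pruned graph coincide with $layer(k)$ of $\mathcal{G}$ (equivalently, the longest path to a leaf equals the number of prunings a node survives). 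Thus the inductive hypothesis applied to the non-leaf subproblem peels layers in the correct order and produces, after permutation, a linear map identifying each original $layer(k)$ node with $k\ge 1$ up to a combination of $layer(k)\cup\dots\cup layer(r)$.

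Finally I would stack the $layer(0)$ estimates from the first peel with the recursively refined estimates of the deeper layers. Ordering by layer, the resulting transfer matrix $C$ from $Z$ to $\hat{Z}$ is block-triangular with the zero pattern $[C]_{ij}=0$ whenever $i\in layer(k)$ and $j$ lies in a strictly lower layer, matching Definition~\ref{def:def1}; its diagonal blocks are the invertible maps produced at each stage, so $C$ is invertible and, after absorbing a suitable permutation into $P_\pi$, has nonzero diagonal entries. The main obstacle is the validity of the recursion rather than this final bookkeeping: one must verify both that marginalizing the leaves returns a genuine nonlinear additive Gaussian model (so Lemmas~\ref{lem:1}--\ref{lem:3} re-apply verbatim) and that the block-triangular structure, and hence full column rank together with the nonvanishing diagonal, is preserved across iterations, alongside the graph-theoretic fact that iterated leaf-removal recovers the layer decomposition in order.
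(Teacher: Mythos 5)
Your proposal is correct and follows essentially the same route as the paper's proof: iteratively apply Lemma~\ref{lem:3} to peel off the current leaf layer, then treat the remaining nonzero-variance estimates as a fresh full-rank linear mixing of the non-leaf latents and recurse. You are in fact more explicit than the paper about verifying the recursion's preconditions (that marginalizing out the leaves yields a valid nonlinear additive Gaussian noise model on the induced subgraph, and that iterated leaf removal reproduces the layer decomposition in order), steps the paper's argument takes for granted when it simply ``views $X'$ as observations of the non-leaf latent variables.''
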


Importantly, this result holds without any structural restrictions on the mixing function or the latent causal DAG. It indicates that we can derive representations of latent factors free of all downstream variables, and that it is easier to disentangle the more upstream causal factors.



    Building on Theorem~\ref{thm:1}, we can show a stronger notion of identifiability for the exogenous noise variables. Consider any $layer(i)$ representation given by a linear combination of all variables in {$layer(i+1)\cup\dots\cup layer(r)$, where $r$ denotes the top most layer}. Then from the structural equations, it follows that this representation depends nonlinearly on the exogenous noise variables associated with ${layer(i+1)\cup\dots\cup layer(r)}$ and linearly on the the exogenous noise variables associated with $layer(i)$, which we denote by $\mathcal{E}_{{layer(i)}}$. Thus, if we regress this representation on all upstream layer representations, e.g., using kernel regression, then the residual terms will equate to a linear combination of $\mathcal{E}_{{layer(i)}}$. Performing this procedure over all layers $i$, we can determine layer-wise transformations of $\mathcal{E}$, giving rise to the following theorem.


\begin{theorem}\label{thm:2}
    Under Assumptions~\ref{ass:1} and~\ref{ass:2}, the exogenous noise variables $\mathcal{E}$ are identifiable up to their layers from purely observational data.
\end{theorem}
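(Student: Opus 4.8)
The plan is to upgrade the layer-wise \emph{linear-in-$Z$} representations supplied by Theorem~\ref{thm:1} into layer-wise \emph{linear-in-$\mathcal{E}$} residuals, using exactly one nonparametric regression per layer. Fix the representation $\hat{Z}(X)=P_\pi\cdot C\cdot Z$ from Theorem~\ref{thm:1}, and for a component $\hat{Z}_m$ associated with $layer(k)$ write $\hat{Z}_m=\sum_{j\,:\,layer(j)\ge k}c_{mj}Z_j$, where the coefficient $c_{mi}$ on its own node $i$ is nonzero. The first thing I would record is a structural input: since $Z=\beta\hat{Z}$ with $\beta\in\mathbb{R}^{n\times n}$ invertible whenever $\hat{H}$ has full column rank, we have $C=P_\pi^{\top}\beta^{-1}$, so $C$ is invertible; being block-triangular with respect to the layer ordering (its entries vanish on strictly downstream layers), the product-of-determinants identity forces every within-layer diagonal block of $C$ to be invertible as well. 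This is what I will lean on repeatedly.

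Next I would pass to the \emph{reduced form} of the structural equations. Iterating $Z_i=f_i(Z_{pa(i)})+\mathcal{E}_i$ along $\mathcal{G}$ and using that every ancestor of a $layer(k)$ node lies in a strictly higher layer, each variable can be written as $Z_j=\tilde{f}_j(\mathcal{E}_{an(j)})+\mathcal{E}_j$ with $an(j)$ contained in layers above $layer(j)$. Substituting this into $\hat{Z}_m$ and grouping terms by layer gives the key decomposition
\begin{align*}
    \hat{Z}_m=\Phi_m(\mathcal{E}_{>k})+\sum_{j\in layer(k)}c_{mj}\,\mathcal{E}_j ,
\end{align*}
where $\mathcal{E}_{>k}$ collects the noise variables in layers $k+1,\dots,r$ and $\Phi_m$ absorbs all (generally nonlinear) dependence on them; the second summand is linear in the same-layer noise and keeps the nonzero coefficient $c_{mi}$ on $\mathcal{E}_i$.

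The third step is to show that regressing $\hat{Z}_m$ on the representations $\hat{Z}_{>k}$ of all strictly-upstream layers isolates exactly this same-layer linear term. The node set in layers above $k$ is ancestrally closed, so the induced sub-model has an invertible reduced form $\mathcal{E}_{>k}\mapsto Z_{>k}$ (inverted explicitly by $\mathcal{E}_j=Z_j-f_j(Z_{pa(j)})$); composing with the invertible sub-block of $C$ indexed by those layers shows that $\hat{Z}_{>k}$ is an invertible function of $\mathcal{E}_{>k}$. Hence conditioning on $\hat{Z}_{>k}$ is equivalent to conditioning on $\mathcal{E}_{>k}$, and because the layer-$k$ noises are mutually independent of $\mathcal{E}_{>k}$ and mean-zero, $\mathbb{E}[\hat{Z}_m\mid \hat{Z}_{>k}]=\Phi_m(\mathcal{E}_{>k})$. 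In the infinite-data regime a consistent nonparametric (e.g.\ kernel) regression recovers this conditional mean exactly, so the residual $\hat{\mathcal{E}}_m:=\hat{Z}_m-\mathbb{E}[\hat{Z}_m\mid \hat{Z}_{>k}]=\sum_{j\in layer(k)}c_{mj}\mathcal{E}_j$ is a linear combination of layer-$k$ noises alone. Carrying this out for every layer and stacking the residuals produces $\hat{\mathcal{E}}(X)=P_\pi\cdot C'\cdot\mathcal{E}$, where $C'$ inherits the invertible within-layer blocks of $C$, is block-diagonal across layers, and has nonzero diagonal; that is, $[C']_{ij}=0$ whenever $i\in layer(k)$ and $j\notin layer(k)$, which is precisely identifiability up to layers.

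The main obstacle is the third step: making ``conditioning on $\hat{Z}_{>k}$ equals conditioning on $\mathcal{E}_{>k}$'' rigorous, since this is what simultaneously annihilates the same-layer noise and leaves $\Phi_m$ untouched. It requires combining two invertibility facts carefully --- the block-triangular invertibility of $C$ from Theorem~\ref{thm:1} and the invertibility of the sub-SCM reduced form on the ancestrally-closed set of upstream layers --- and then observing that the cancellation of $\mathbb{E}[\sum_{j\in layer(k)}c_{mj}\mathcal{E}_j\mid \mathcal{E}_{>k}]$ rests only on mean-zero independence of the Gaussian noise, so no hypothesis on $f$ beyond those already used in Theorem~\ref{thm:1} is needed here.
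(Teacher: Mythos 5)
Your proposal is correct and follows essentially the same route as the paper's proof: decompose each $layer(k)$ representation via the structural equations into a same-layer linear noise term plus a nonlinear function of strictly upstream noise, then recover the former as the residual of a layer-by-layer nonparametric regression. The only cosmetic difference is that you condition on $\hat{Z}_{>k}$ while the paper conditions on the already-recovered $\hat{\mathcal{E}}_{>k}$; since both generate the same $\sigma$-algebra as $\mathcal{E}_{>k}$ (a point your invertibility argument makes explicit, and which the paper leaves implicit), the resulting residuals coincide.
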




\subsection{Impossibility Results}\label{sec:3-3}
Next, we show that further disentanglement is not possible. In particular, we cannot further disentangle the exogenous noise variables within any given layer. The following example illustrates this with two variables. Suppose the exogenous noise variables $\mathcal{E}_1$ and $\mathcal{E}_2$ are identified via two linear combinations denoted by
\begin{align*}
    \hat{\mathcal{E}}_1 = a_1\mathcal{E}_1 + a_2\mathcal{E}_2,\quad\hat{\mathcal{E}}_2 = b_1\mathcal{E}_1 + b_2\mathcal{E}_2.
\end{align*}
We only know that they are independent mean-zero Gaussian variables. However, any linear coefficients with $a_1 b_1\sigma_1^2 + a_2 b_2\sigma_2^2 = 0$ satisfy $Cov(\hat{\mathcal{E}}_1, \hat{\mathcal{E}}_2) = a_1 b_1\sigma_1^2 + a_2 b_2\sigma_2^2 =  0$, which means $\hat{\mathcal{E}}_1$ and $\hat{\mathcal{E}}_2$ are independent. This indicates that $\hat{\mathcal{E}}_1$ and $\hat{\mathcal{E}}_2$ do not provide enough information to further disentangle $\mathcal{E}_1$ or $\mathcal{E}_2$. In general, this impossibility result holds for arbitrary graphs.

\begin{proposition}\label{thm:3}
    Under Assumptions~\ref{ass:1} and~\ref{ass:2}, the exogenous noise variables $\mathcal{E}$ are generally unidentifiable beyond layer-wise transformation from observational data.
\end{proposition}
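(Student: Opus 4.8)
The plan is to establish unidentifiability by exhibiting, for a generic model, a \emph{second} data-generating model that satisfies Assumptions~\ref{ass:1} and~\ref{ass:2}, induces the \emph{identical} observational distribution $p_X(\cdot)$, yet whose exogenous noise variables differ from the original ones by a within-layer mixing that is not reducible to a permutation composed with coordinate-wise rescaling. Since any identification procedure has access only to $p_X(\cdot)$, it cannot distinguish the two models, so the within-layer block of the transformation $C$ appearing in Theorem~\ref{thm:2} is genuinely undetermined and $\mathcal{E}$ cannot be pinned down beyond a layer-wise transformation. This reduces the proposition to constructing one such observationally-equivalent pair.

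For the construction I would fix two nodes $i,j$ lying in the same layer and define a new latent vector $\tilde{Z} = T Z$, where $T$ equals the identity outside the coordinates $\{i,j\}$ and acts as an invertible block $\left(\begin{smallmatrix}\alpha & \gamma\\ \delta & \epsilon\end{smallmatrix}\right)$ on $(Z_i, Z_j)$. Setting $\tilde{H} = H T^{-1}$ leaves the observation $X = H Z = \tilde{H}\tilde{Z}$ unchanged as a function of the underlying randomness, so $p_X(\cdot)$ is preserved exactly. The key point is that $\tilde{Z}$ is again a valid nonlinear additive Gaussian model: because nodes in a common layer form an antichain and all their parents lie in strictly higher layers, substituting $Z = T^{-1}\tilde{Z}$ into the structural equations recombines the two mechanisms into $\tilde{f}_i = \alpha f_i + \gamma f_j$ and $\tilde{f}_j = \delta f_i + \epsilon f_j$ (now taking $pa(i)\cup pa(j)$ as inputs), reparametrizes the mechanisms of the children of $i$ and $j$ by composition with the linear unmixing while leaving their noise untouched, and yields new noise $\tilde{\mathcal{E}} = T\mathcal{E}$ with $\tilde{\mathcal{E}}_i = \alpha\mathcal{E}_i + \gamma\mathcal{E}_j$ and $\tilde{\mathcal{E}}_j = \delta\mathcal{E}_i + \epsilon\mathcal{E}_j$.

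It then remains to choose the block so that the new model meets Assumption~\ref{ass:2}. The recombined noise is automatically Gaussian and mean-zero, and its mutual independence reduces, exactly as in the two-variable display preceding the proposition, to the single scalar condition $\mathrm{Cov}(\tilde{\mathcal{E}}_i,\tilde{\mathcal{E}}_j) = \alpha\delta\sigma_i^2 + \gamma\epsilon\sigma_j^2 = 0$. I would pick, for instance, $\alpha=\epsilon=1$, $\gamma = t$, and $\delta = -t\,\sigma_j^2/\sigma_i^2$ for a small $t\neq 0$, which satisfies this constraint while keeping $T$ invertible and manifestly \emph{not} of the form permutation times diagonal. Since the original model corresponds to $T = I$, the two models are observationally indistinguishable yet their noises are related by a nontrivial within-layer mixing, giving the claim.

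The main obstacle I anticipate is verifying that the recombined model is genuinely admissible rather than merely formally so: the modified graph must remain a DAG, and the new mechanisms must stay twice continuously differentiable and, crucially, \emph{directional non-linear} in the sense of Assumption~\ref{ass:2}. Acyclicity is immediate since the antichain/strict-layer structure forbids any new directed cycle, and the children's reparametrized mechanisms inherit directional non-linearity because composition with an invertible linear map only rotates the direction in which the Hessian is evaluated. The only delicate case is $\tilde{f}_i = \alpha f_i + \gamma f_j$, whose non-degeneracy can fail for fine-tuned coefficients — which is precisely why the statement asserts unidentifiability only \emph{generically}. I would dispatch this by noting that $\partial^2_{\beta,\beta}(\alpha f_i + \gamma f_j) = \alpha\,\partial^2_{\beta,\beta} f_i + \gamma\,\partial^2_{\beta,\beta} f_j$ is nonzero for all unit directions $\beta$ for all but a measure-zero set of $(\alpha,\gamma)$, so an admissible nontrivial $T$ can always be found arbitrarily close to the identity.
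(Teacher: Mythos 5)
Your proposal is correct, and it is in fact more complete than what the paper itself provides. The paper's entire argument for this proposition is the informal two-variable display in Section~3.3: it observes that any linear recombination $\hat{\mathcal{E}}_1 = a_1\mathcal{E}_1 + a_2\mathcal{E}_2$, $\hat{\mathcal{E}}_2 = b_1\mathcal{E}_1 + b_2\mathcal{E}_2$ with $a_1 b_1\sigma_1^2 + a_2 b_2\sigma_2^2 = 0$ again yields independent mean-zero Gaussians, concludes that the learned representations do not determine $\mathcal{E}$, and asserts without further detail that this ``holds for arbitrary graphs'' (there is no appendix proof). You use the same core mechanism --- within-layer remixing of independent Gaussians subject to the single covariance constraint --- but you lift it to a genuine observational-equivalence argument: you build a second admissible SCM $(\tilde{H},\tilde{\mathcal{G}},\tilde{f},\tilde{\mathcal{E}})$ with $\tilde{Z}=TZ$, $\tilde{H}=HT^{-1}$, check that the antichain/layer structure keeps the modified graph acyclic, and verify the recombined mechanisms and noises satisfy Assumptions~\ref{ass:1} and~\ref{ass:2}. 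This is the stronger and more defensible form of the claim, since the paper's version only shows that one particular family of estimates is ambiguous, not that \emph{no} functional of $p_X(\cdot)$ could distinguish the models.

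One small caution on your final genericity step: for a \emph{fixed} direction $\beta$ the set of $(\alpha,\gamma)$ killing $\alpha\,\partial^2_{\beta,\beta}f_i + \gamma\,\partial^2_{\beta,\beta}f_j$ is indeed at most a line, but you then take a union over a continuum of directions, and a union of uncountably many null sets need not be null. A cleaner fix is to take $t$ small and argue by compactness of the unit sphere of directions together with continuity of $(\beta,z)\mapsto\partial^2_{\beta,\beta}f_i(z)$: since $f_i$ is directional non-linear, $\sup_z|\partial^2_{\beta,\beta}f_i(z)|$ is bounded away from zero on the subset of directions with a non-negligible $pa(i)$-component, so a sufficiently small perturbation $\gamma\,\partial^2_{\beta,\beta}f_j$ cannot annihilate it there, while directions concentrated on $pa(j)\setminus pa(i)$ are handled directly by the directional non-linearity of $f_j$. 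Alternatively, one can simply lean on the word ``generally'' in the statement, as the paper implicitly does. Either way the gap is minor and you correctly flagged it yourself.
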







\section{Algorithm for Layer Recovery}\label{sec:4}

We now transition to developing practical algorithms to recover the guaranteed causal representations.  Our approach consist of two steps: (1) solving for the representations of latent variables up to upstream-layer transformations, and (2) solving for representations of exogenous noise variables up to layer-wise transformations, where step 2 utilizes the output of step 1. 

\subsection{Step 1: Quadratic Programming on Estimated Scores}\label{sec:4-1}

The proof sketch in Section~\ref{sec:3-2} provides a simple principle for causal disentanglement. It suggests that we can solve for the estimated mixing function, $\hat{H}$, at each iteration by maximizing the number of zero-variance terms in the Jacobian of the estimated latent score (i.e., Equation~(\ref{eq:opt})). However, this rank-constrained optimization problem is discontinuous and non-convex, leading to an NP-hard problem \cite{sun2017rank}. Moreover, the objective function involves the $\ell_0$-norm of a vector of variance terms, which can be hard to optimize. 

To resolve these difficulties, we  reduce this optimization problem into a sequence of easier problems. Note that $Var[J_{\hat{Z}}(z)]_{ii}$ depends only on the $i^{th}$ column of $\hat{H}$, which we denote as $[\hat{H}]_i$. It follows that we can solve for each column separately by solving for $[\hat{H}]_i$ such that $Var[J_{\hat{Z}}(z)]_{ii} = 0$ while not violating the rank constraint. Considering the finite-sample setting where we plug in the sample estimate for $Var[J_{\hat{Z}}(z)]_{ii}$, this problem can be formulated as the following quadratically constrained quadratic program (QCQP):
\begin{equation}\label{eq:qcqp}
      \begin{aligned}
            \min_{h \in \mathbb{R}^n }\quad & 0 \\
             \text{such that}\quad & h^\top  \Tilde{J}_X (x^{(m)}) h = 0,  \quad\forall m \in [N], \\
              & h^\top h = 1, \\
              & h^\top [\hat{H}]_j = 0 , \quad \forall j\in [i-1].
    \end{aligned}   
\end{equation}
Here,  we use estimated zero-centered Jacobians $\tilde{J}_X$ of observed samples $x^{(m)}$, given by $\Tilde{J}_X (x^{(m)}) \triangleq \hat{J}_X (x^{(m)}) - \bar{J}_X (X)$ with $\bar{J}_X (X) \triangleq \nicefrac{1}{N} \sum_{m=1}^{N} \hat{J}_X (x^{(m)})$.
The constraint $h^\top  \Tilde{J}_X (x^{(m)}) h = 0$ is equivalent to enforcing the sample estimate of $Var[J_{\hat{Z}}(z)]_{ii}$ to be zero. The additional constraints $h^\top h=1$ and $h^\top [\hat{H}]_j=0$ ensure that we do not violate the rank constraint. A formal derivation of equivalence is given in Appendix~\ref{app:B}.

Breaking the problem in Equation~\eqref{eq:opt} into a series of problems in Equation~\eqref{eq:qcqp} allows us to operate over a lower dimensional space and use any off-the-shelf solvers for QCQP. In practice, we use the cutting plane method for mixed integer programming~\cite{marchand2002cutting}. Algorithm~\ref{alg:1} summarizes the overall approach, where we construct the $layer(k)$ representations iteratively by solving a series of QCQPs.

\begin{algorithm}
\caption{Recovering $Z$ up to upstream layers.}\label{alg:1}
\begin{algorithmic}[1] 
    \State \textbf{Input:} $N$ samples of $X$ in the observational distribution.
    \State Estimate $\Tilde{J}_X (x^{(m)}),\forall m \in [N]$ using any off-the-shelf score estimation method (see Section~\ref{sec:5}).
    \State Initialize $\hat{Z} = 0^{n \times N}$, $\hat{X}=(x^{(1)},\dots,x^{(N)})\in \mathbb{R}^{d \times N}$, and $k=n$.
    \While{$k>0$}
    \State Initialize $\hat{H} = 0^{k \times d}$.
    \For{$i=1,\dots,d$}
    \State Set $[\hat{H}]_i$ to be the solution of Equation~\eqref{eq:qcqp}. Break when no feasible solution is found.
    \EndFor
    \State Fill in all-zero columns of $\hat{H}$ with random vectors to remain full column rank.
    \State Compute $\tilde{Z}=\hat{H}^\dagger\hat{X}$ and $J_{\tilde{Z}}(\tilde{Z}^{(m)}) = \hat{H}^\top \tilde{J}_X(\hat{x}^{(m)})\hat{H},\forall m\in[N]$.
        \State Set $\hat{X} = 0^{0 \times N}$.
        \For{$i = 1, ..., k$}
            \If{$Var[J_{\hat{Z}}(\hat{z})]_{i,i} = 0$}
                \State Set $[\hat{Z}]_{n-k} = [\tilde{Z}]_i$ and let $k \gets k-1$.
            \Else
                \State $\hat{X} \gets \left[\hat{X},  [\tilde{Z}]_i \right]$.
            \EndIf
        \EndFor
    \EndWhile
    \State \textbf{Return:} $\hat{Z}$
\end{algorithmic}
\end{algorithm}

\subsection{Step 2: Layer-wise Nonlinear Regression}

Given the learned representation $\hat{Z}$ from Algorithm~\ref{alg:1}, we now proceed to disentangle the exogenous noise variables, which fully determine the randomness of the observations.

Following the proof of Theorem~\ref{thm:2}, we can recover a representation of the exogenous noise variables {$\mathcal{E}_{layer(k)}$} by non-linearly regressing the $layer(k)$ representation of $Z$ on all upstream representations and taking the residual terms. This procedure is summarized in Algorithm~\ref{alg:2}.


\begin{algorithm}
\caption{Recovering $\cE$ up to layers.}\label{alg:2}
\begin{algorithmic}[1] 
    \State \textbf{Input:} $\hat{Z}\in\mathbb{R}^{n\times N}$ estimated from Algorithm~\ref{alg:1}. Denote the number of layers as $K$.
    \State Initialize $\hat{\mathcal{E}} = 0^{n \times N}$. Set $\hat{\mathcal{E}}_{{layer(K)}}=\hat{Z}_{{layer(K)}}$.
    \For{$k = K-1,...,0$}
        \State Fit nonlinear regression on $\hat{Z}_{{layer(k)}}$ using $\hat{\mathcal{E}}_{{layer(k+1)}},\dots,\hat{\mathcal{E}}_{{layer(K)}}$.
        \State Set $\hat{\mathcal{E}}_{{layer(k)}}$ as the residual terms.
    \EndFor
    \State \textbf{Return:} $\hat{\mathcal{E}}$.
\end{algorithmic}
\end{algorithm}


\section{Numerical Results}\label{sec:5}
    We test our proposed algorithms using simulations\footnote{Code is publicly available at: \href{https://github.com/uhlerlab/observational-crl}{\texttt{https://github.com/uhlerlab/observational-crl}}}. Algorithm~\ref{alg:1} requires estimating the score of the observational distribution and its performance relies on the quality of this estimation. To evaluate this, we conduct two sets of experiments. In Section~\ref{sec:5-1}, we use perfect score oracles, which compute the Jacobian matrices exactly using the ground-truth data-generating process. This serves as a verification of our theoretical results. In Section~\ref{sec:5-2}, we estimate the score functions from the samples using two popular score-estimation methods. Details of the experiments can be found in Appendix~\ref{app:C}.



\subsection{Score Oracle Simulations: Validation of Theoretical Results}\label{sec:5-1}

To further validate our theoretical results, we run Algorithms~\ref{alg:1} and~\ref{alg:2} to learn the latent causal factors and the exogenous noise variables. We consider the following causal graphs with 4 nodes: (1) a line graph represented as $Z_1 \rightarrow Z_2 \rightarrow Z_3 \rightarrow Z_4$, and (2) a Y-structure represented as $Z_1 \rightarrow Z_2 \rightarrow Z_3$, $Z_2 \rightarrow Z_4$. For each case, we generate 2000 observational samples and compute the corresponding scores using the ground-truth link functions.

We present the results of our estimation in Figure~\ref{fig:3}. The scatter plots depict the relationships between the ground-truth $\mathcal{Z}_i$ and the estimated $\hat{\mathcal{Z}}_j$, where we color the dots with the values of $Z_1$. The heatmaps show the mean absolute correlations (MAC) between the ground-truth $\mathcal{E}_i$ and the estimated $\hat{\mathcal{E}}_j$. For the estimated latent causal factors $\hat{Z}$, we see trends that are consistent with Theorem~\ref{thm:1} in both cases, where the root node is perfectly identified and $Z_2$ is estimated with some mixing of $Z_1$. For the estimated exogenous noise variables $\hat{\mathcal{E}}$, the results validate Theorem~\ref{thm:2}. In the line graph, our algorithm perfectly disentangles all variables; in the Y-structure, we can perfectly disentangle $\cE_1$ and $\cE_2$, while $\cE_3$ and $\cE_4$ are mixed.



\begin{figure}[h]
  \centering
  \subfloat[]{\includegraphics[width=0.45\textwidth]{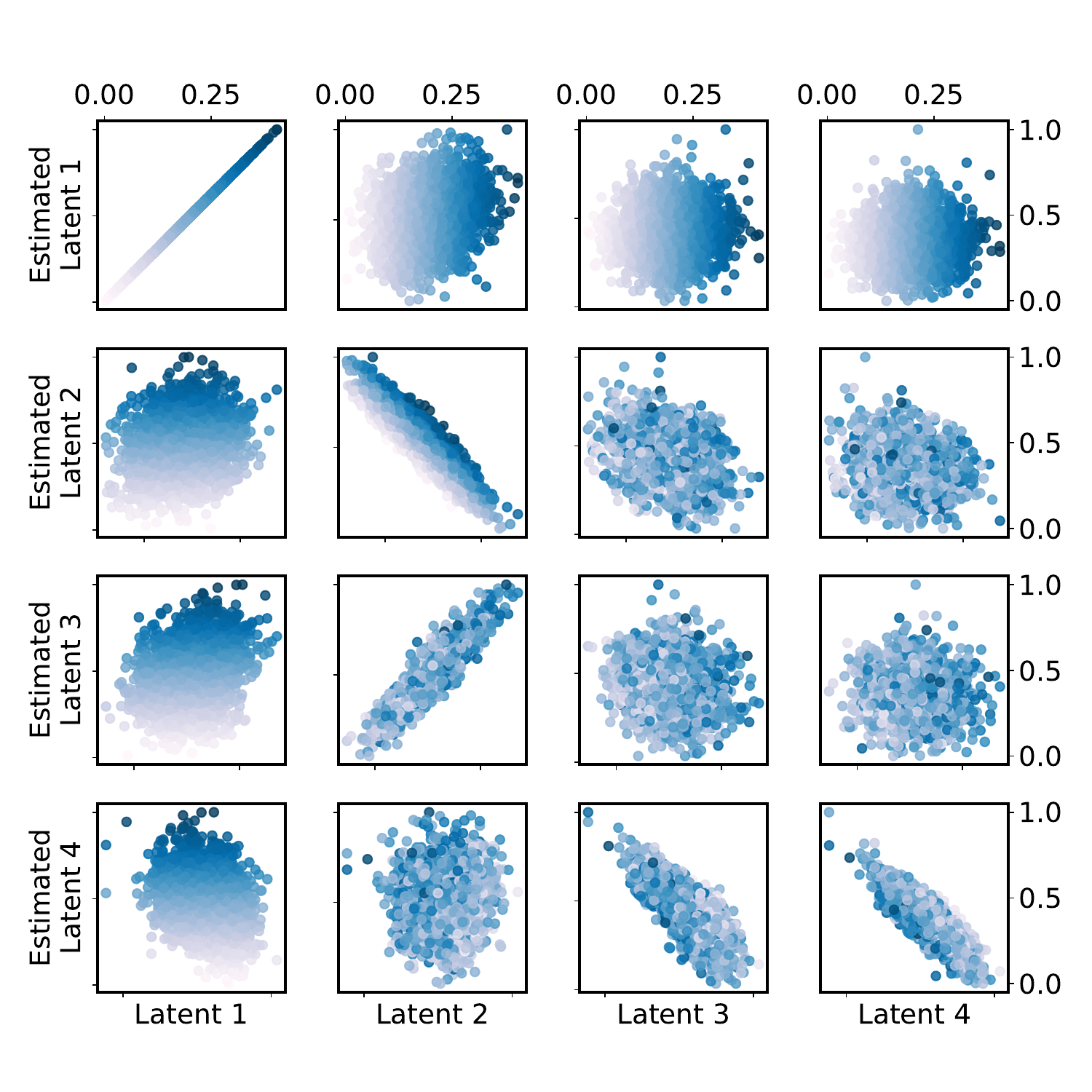}}
 \hspace{0.05\textwidth}
  \subfloat[]{\includegraphics[width=0.45\textwidth]{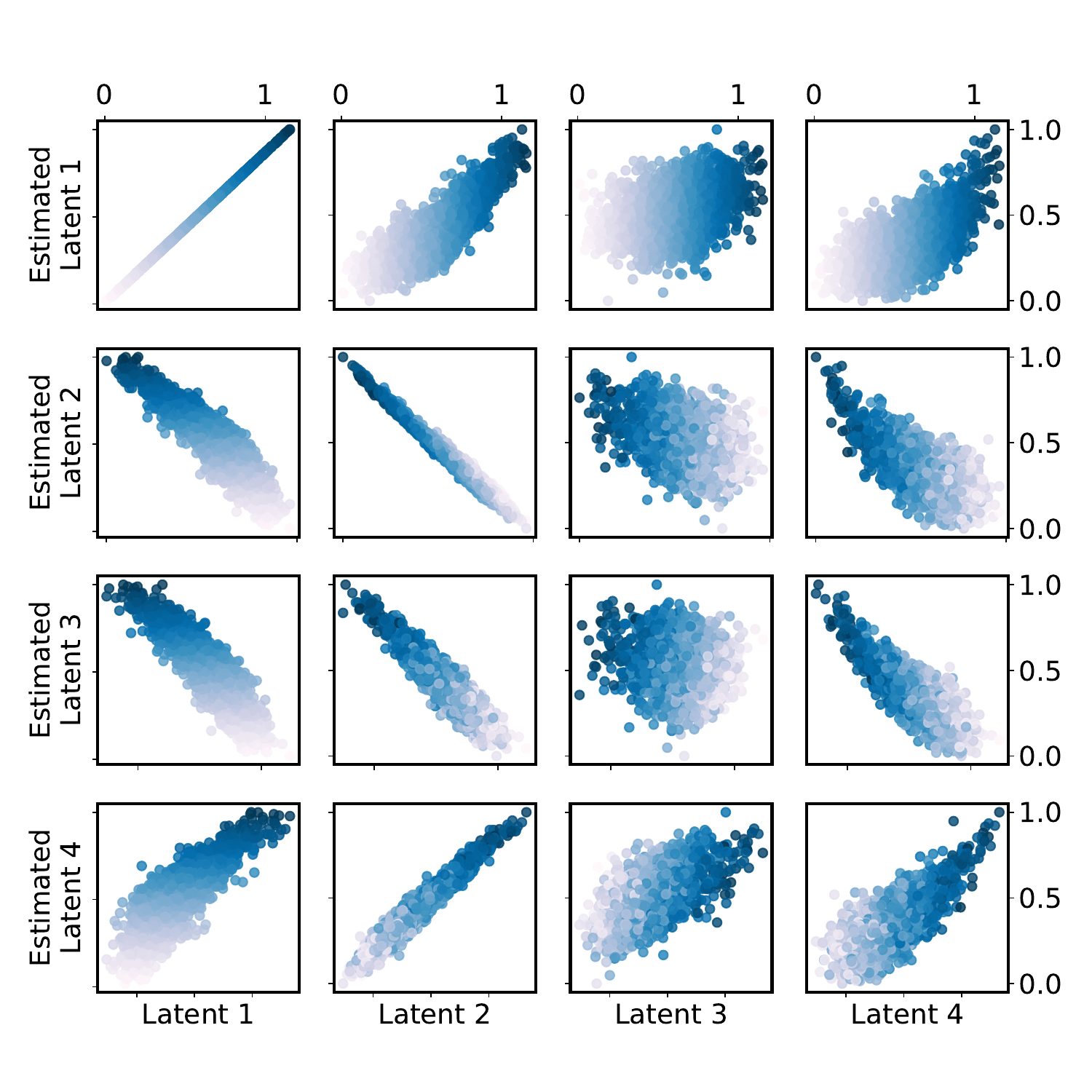}}\\
  \subfloat[]{\includegraphics[width=0.26\textwidth]{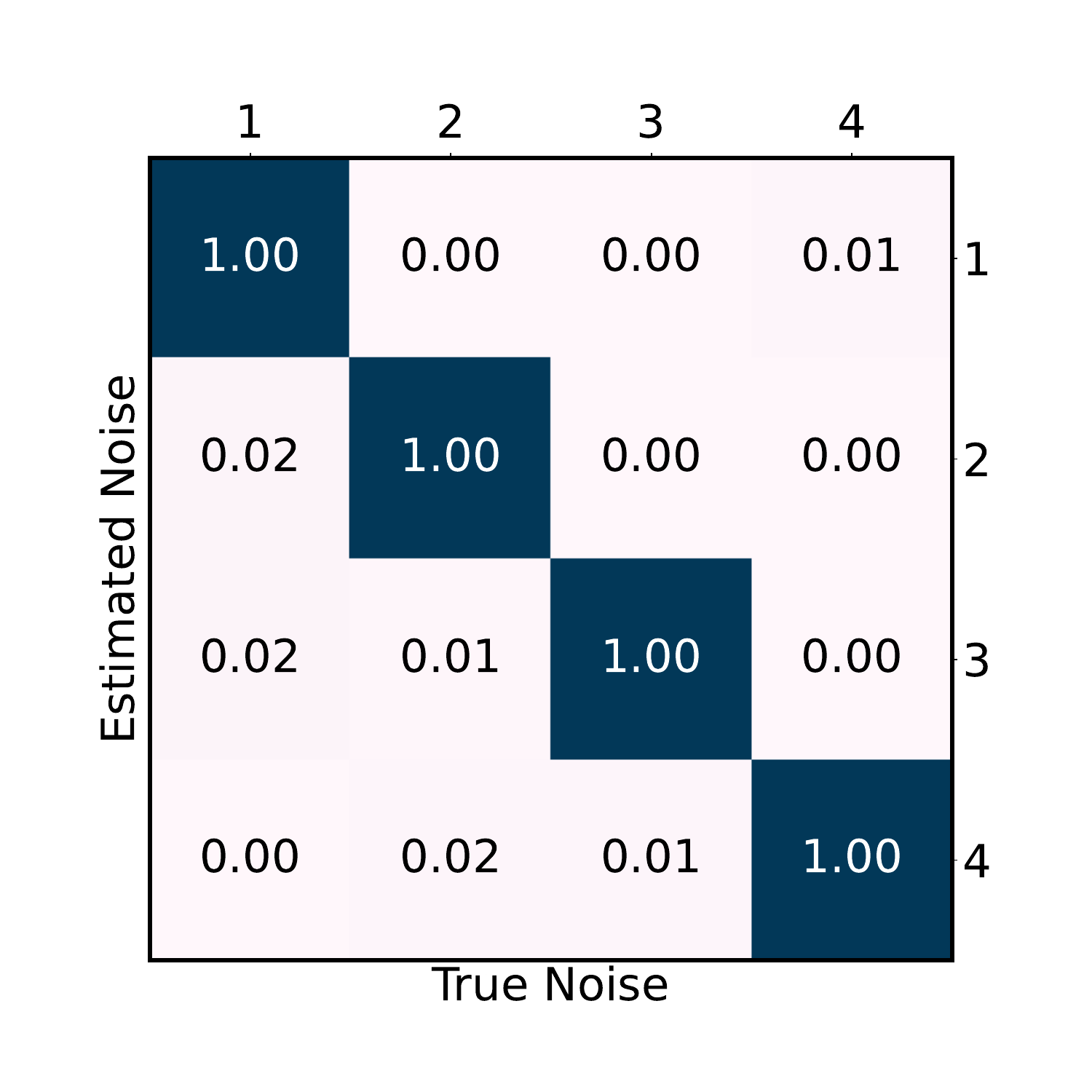}}
   \hspace{0.22\textwidth}
  \subfloat[]{\includegraphics[width=0.26\textwidth]{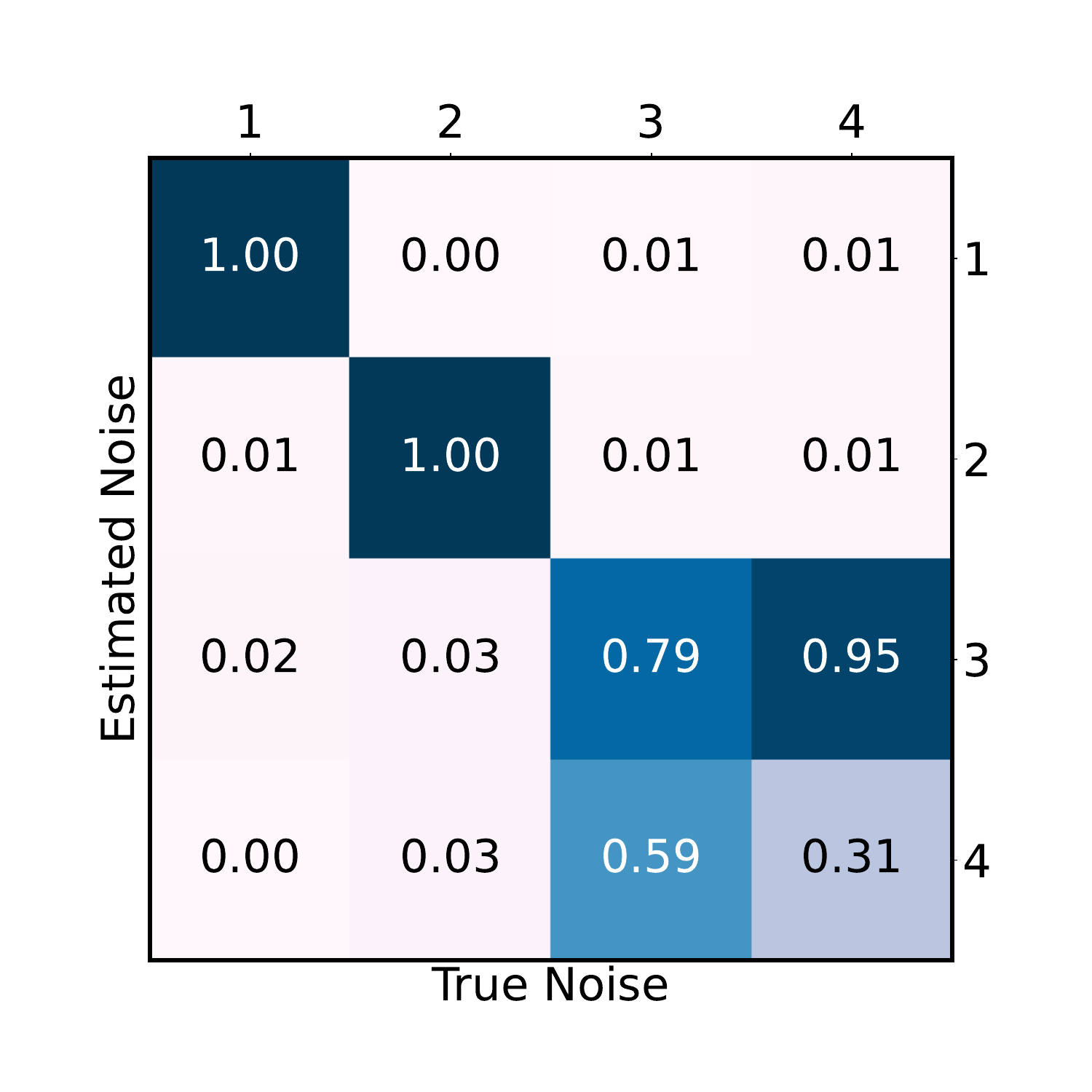}}
  \caption{\textbf{Score oracle simulations.} (A) Estimated versus true latent variables on the line graph. (B) Estimated versus true latent variables on the Y-structure. (C) Estimated versus true exogenous variables on the line graph. (D) Estimated versus true exogenous variables on the Y-structure.}\label{fig:3}
\end{figure}




\subsection{Results using Score Estimation}\label{sec:5-2}

In this set of experiments, we aim to mimic real-world settings where the score functions are estimated from samples. We use two popular methods to generate point-wise estimates of the Jacobians of the scores: the second-order Stein estimator \cite{bellec2021second, stein1972bound} and the sliced score matching with variance reduction (SSM-VR) estimator~\cite{song2020sliced}. We then plug these estimators into Algorithms~\ref{alg:1} and~\ref{alg:2}.

We use the same sampling procedure on the four-node line graph as described in the previous section with varying sample sizes. Here we evaluate the  mean absolute correlation (MAC) between the true and estimated exogenous noise variables. We adjust the tolerance of our QCQP solver to account for noisy estimates (see Appendix~\ref{app:C}). Table~\ref{tab:tab1} reports the results averaged across 10 repeated runs. With noisy score estimates, we can still learn these variables although, as expected, accuracy decreases as compared to the results using oracle score estimates, where we can recover the exogenous noise almost perfectly.


\begin{figure}[htbp]
    \centering
    \begin{minipage}{0.6\textwidth}
    \captionof{table}{Mean absolute correlation of the exogenous noise estimates using score estimations.\vspace{.1in}}
    \label{tab:tab1}
    \centering
    \begin{tabular}{ccccc}
    \toprule
    Samples & Oracle & Stein & SSM-VR \\
    \midrule
    $1 \times 10^3$ & 0.999 ± 0.0 & 0.39 ± 0.093 & 0.358 ± 0.153 \\
    $5 \times 10^3$ & 1.0 ± 0.0 & 0.445 ± 0.164 & 0.45 ± 0.143 \\
    $1 \times 10^4$ & 1.0 ± 0.0 & 0.371 ± 0.227 & 0.44 ± 0.222 \\
        $5 \times 10^4$ & 1.0 ± 0.0 & 0.367 ± 0.135 & 0.43 ± 0.115 \\
    \bottomrule
    \end{tabular}
    \end{minipage}\hfill
    \begin{minipage}{0.32\textwidth}
    \centering
        \includegraphics[width=\textwidth]{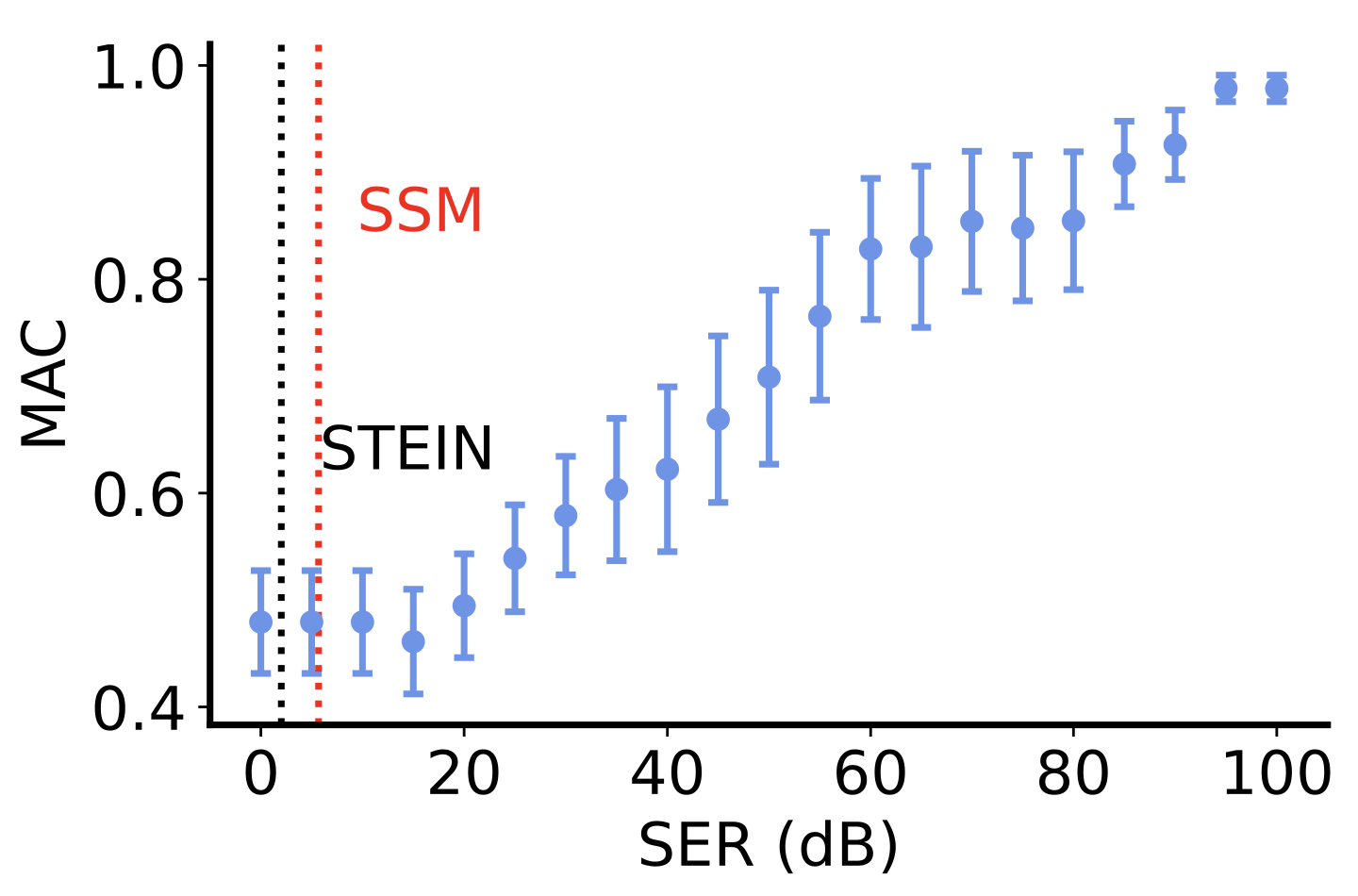}
    \caption{ {Mean absolute correlation (}MAC{)} of $\cE$ estimations v.s. {Signal-to-error ratio (}SER{)} of Jacobian matrices.}\label{fig:ser}
    \end{minipage}
\end{figure}

As reliable higher-order score estimation is an active area of research \cite{meng2021estimating, song2019generative, pang2020efficient}, we seek to evaluate how the accuracy of our algorithm can increase under improved score estimation. Specifically, we consider how the MAC of exogenous noise estimates behaves under varying levels of noise in the plug-in Jacobians. We perturb the true Jacobian matrices with noise and plot the returned MAC with respect to the signal-to-error ratio (SER) in Figure~\ref{fig:ser}. This shows that the accuracy improves with higher SER. We also mark the MAC of Stein estimation and SSM-VR, which are approximately around 2 and 6 SERs respectively. 



\section{Discussion}\label{sec:6}

In this work, we derive partial identifabilty guarantees of causal disentanglement from purely observational data and linear mixing without any structural restrictions. In particular, we utilize asymmetries in nonlinear causal models with additive Gaussian noise. We provide a precise characterization of identifiability in this setting, where the latent causal factors can be identified up to upstream layers and the exogenous noise variables can be identified up to their layers. We show that further disentanglement is not possible without additional assumptions or alternative datasets. 

These theoretical analyses indicate a simple but hard to optimize principle for deriving efficient algorithms. We show that this optimization problem can be solved via a series of simpler quadratically constrained quadratic programs. This leads to a flexible algorithm that allows us to use any off-the-shelf QCQP solvers and score estimation methods. We demonstrate its correctness and efficiency using simulations.

{While we view this work as having a primarily theoretical contribution, we additionally believe that the notion of layer-wise identifiability has many practical implications. In particular, our methods can be used to identify hierarchical topics at various layers of a causal system. For instance, when applied to a latent genealogical tree, each layer representation would contain all prior ancestral information used to determine the traits of a given generation.}

In future work, it would be interesting to extend our results to latent causal models with other asymmetries. {In particular, we believe our result could be extended to learn upstream layer representations of nonlinear additive models with generic noise as an extension of \cite{montagna2023causal}, by modifying the
principle to achieve identifiability in Equation~(\ref{eq:opt})}. It would also be interesting to understand how our identifability results in the purely observational setting could aid when additional external data, such as interventions or multi-modal data, are available. Additionally, since our work shows that causal disentanglement can be solved orthogonally to score estimation, extending and testing our proposed approaches to applications where there exist pretrained score estimators would be another interesting avenue to pursue. {Furthermore, given further disentanglement beyond layer-wise identifiability is not possible with purely observational data, it remains unclear what minimum faithfulness assumptions are required to achieve stronger identifiability guarantees, which we view as an import question.}

\textbf{Broader impact.}
Our work advances the field of causal representation learning, where it was commonly thought that without interventional data causal variables could only be discovered up to linear combinations of all variables without structural assumptions. While our work has many potential applications, we feel that no particular societal consequence needs to be highlighted.

\section*{Acknowledgements}

We thank Chandler Squires for helpful discussions, as well as \textcolor{black}{Karthikeyan Shanmugan} and the anonymous reviewers for their valuable feedback.
This work was partially supported by ONR (N00014-22-1-2116), NCCIH/NIH (1DP2AT012345), DOE (DE-SC0023187), and a Simons Investigator Award to Caroline Uhler.
R.W. was supported by a fellowship by the Eric and Wendy Schmidt Center at the Broad Institute and the Advanced Undergraduate Research Opportunities Program at MIT.
J.Z. was partially supported by an Apple AI/ML PhD Fellowship. 

\bibliographystyle{plain}
\bibliography{references}%

\begin{thebibliography}{10}

\bibitem{ahuja2022weakly}
Kartik Ahuja, Jason~S Hartford, and Yoshua Bengio.
\newblock Weakly supervised representation learning with sparse perturbations.
\newblock {\em Advances in Neural Information Processing Systems}, 35:15516--15528, 2022.

\bibitem{ahuja2023interventional}
Kartik Ahuja, Divyat Mahajan, Yixin Wang, and Yoshua Bengio.
\newblock Interventional causal representation learning.
\newblock In {\em International Conference on Machine Learning}, pages 372--407. PMLR, 2023.

\bibitem{ahuja2024multi}
Kartik Ahuja, Amin Mansouri, and Yixin Wang.
\newblock Multi-domain causal representation learning via weak distributional invariances.
\newblock In {\em International Conference on Artificial Intelligence and Statistics}, pages 865--873. PMLR, 2024.

\bibitem{andersson1997characterization}
Steen~A Andersson, David Madigan, and Michael~D Perlman.
\newblock A characterization of markov equivalence classes for acyclic digraphs.
\newblock {\em The Annals of Statistics}, 25(2):505--541, 1997.

\bibitem{bellec2021second}
Pierre~C Bellec and Cun-Hui Zhang.
\newblock Second-order stein: Sure for sure and other applications in high-dimensional inference.
\newblock {\em The Annals of Statistics}, 49(4):1864--1903, 2021.

\bibitem{bengio2013representation}
Yoshua Bengio, Aaron Courville, and Pascal Vincent.
\newblock Representation learning: A review and new perspectives.
\newblock {\em IEEE Transactions on Pattern Analysis and Machine Intelligence}, 35(8):1798--1828, 2013.

\bibitem{bing2023identifying}
Simon Bing, Urmi Ninad, Jonas Wahl, and Jakob Runge.
\newblock Identifying linearly-mixed causal representations from multi-node interventions.
\newblock {\em arXiv preprint arXiv:2311.02695}, 2023.

\bibitem{brehmer2022weakly}
Johann Brehmer, Pim De~Haan, Phillip Lippe, and Taco~S Cohen.
\newblock Weakly supervised causal representation learning.
\newblock {\em Advances in Neural Information Processing Systems}, 35:38319--38331, 2022.

\bibitem{buchholz2024learning}
Simon Buchholz, Goutham Rajendran, Elan Rosenfeld, Bryon Aragam, Bernhard Sch{\"o}lkopf, and Pradeep Ravikumar.
\newblock Learning linear causal representations from interventions under general nonlinear mixing.
\newblock {\em Advances in Neural Information Processing Systems}, 36, 2024.

\bibitem{buhlmann2014cam}
Peter B{\"u}hlmann, Jonas Peters, and Jan Ernest.
\newblock {CAM:} causal additive models, high-dimensional order search and penalized regression.
\newblock {\em The Annals of Statistics}, 42:2526--2556, 2014.

\bibitem{cai2019triad}
Ruichu Cai, Feng Xie, Clark Glymour, Zhifeng Hao, and Kun Zhang.
\newblock Triad constraints for learning causal structure of latent variables.
\newblock {\em Advances in Neural Information Processing Systems}, 32, 2019.

\bibitem{chickering2004large}
Max Chickering, David Heckerman, and Chris Meek.
\newblock Large-sample learning of bayesian networks is np-hard.
\newblock {\em Journal of Machine Learning Research}, 5:1287--1330, 2004.

\bibitem{gurobi}
{Gurobi Optimization, LLC}.
\newblock {Gurobi Optimizer Reference Manual}, 2023.

\bibitem{halpern2015anchored}
Yoni Halpern, Steven Horng, and David Sontag.
\newblock Anchored discrete factor analysis.
\newblock {\em arXiv preprint arXiv:1511.03299}, 2015.

\bibitem{hyvarinen1999nonlinear}
Aapo Hyv{\"a}rinen and Petteri Pajunen.
\newblock Nonlinear independent component analysis: Existence and uniqueness results.
\newblock {\em Neural Networks}, 12(3):429--439, 1999.

\bibitem{imoto2001estimation}
Seiya Imoto, Takao Goto, and Satoru Miyano.
\newblock Estimation of genetic networks and functional structures between genes by using bayesian networks and nonparametric regression.
\newblock In {\em Biocomputing 2002}, pages 175--186. World Scientific, 2001.

\bibitem{jiang2024learning}
Yibo Jiang and Bryon Aragam.
\newblock Learning nonparametric latent causal graphs with unknown interventions.
\newblock {\em Advances in Neural Information Processing Systems}, 36, 2024.

\bibitem{kaddour2022causal}
Jean Kaddour, Aengus Lynch, Qi~Liu, Matt~J Kusner, and Ricardo Silva.
\newblock Causal machine learning: A survey and open problems.
\newblock {\em arXiv preprint arXiv:2206.15475}, 2022.

\bibitem{kivva2021learning}
Bohdan Kivva, Goutham Rajendran, Pradeep Ravikumar, and Bryon Aragam.
\newblock Learning latent causal graphs via mixture oracles.
\newblock {\em Advances in Neural Information Processing Systems}, 34:18087--18101, 2021.

\bibitem{kivva2022identifiability}
Bohdan Kivva, Goutham Rajendran, Pradeep Ravikumar, and Bryon Aragam.
\newblock Identifiability of deep generative models without auxiliary information.
\newblock {\em Advances in Neural Information Processing Systems}, 35:15687--15701, 2022.

\bibitem{kong2024identification}
Lingjing Kong, Biwei Huang, Feng Xie, Eric Xing, Yuejie Chi, and Kun Zhang.
\newblock Identification of nonlinear latent hierarchical models.
\newblock {\em Advances in Neural Information Processing Systems}, 36, 2024.

\bibitem{Li_Dodiscover_Causal_discovery}
Adam Li, Jaron Lee, Francesco Montagna, Chris Trevino, and Robert Ness.
\newblock {Dodiscover: Causal discovery algorithms in Python.}

\bibitem{lippe2022citris}
Phillip Lippe, Sara Magliacane, Sindy L{\"o}we, Yuki~M Asano, Taco Cohen, and Stratis Gavves.
\newblock Citris: Causal identifiability from temporal intervened sequences.
\newblock In {\em International Conference on Machine Learning}, pages 13557--13603. PMLR, 2022.

\bibitem{liu2022identifying}
Yuhang Liu, Zhen Zhang, Dong Gong, Mingming Gong, Biwei Huang, Anton van~den Hengel, Kun Zhang, and Javen~Qinfeng Shi.
\newblock Identifying weight-variant latent causal models.
\newblock {\em arXiv preprint arXiv:2208.14153}, 2022.

\bibitem{marchand2002cutting}
Hugues Marchand, Alexander Martin, Robert Weismantel, and Laurence Wolsey.
\newblock Cutting planes in integer and mixed integer programming.
\newblock {\em Discrete Applied Mathematics}, 123(1-3):397--446, 2002.

\bibitem{marra2011practical}
Giampiero Marra and Simon~N Wood.
\newblock Practical variable selection for generalized additive models.
\newblock {\em Computational Statistics \& Data Analysis}, 55(7):2372--2387, 2011.

\bibitem{meng2021estimating}
Chenlin Meng, Yang Song, Wenzhe Li, and Stefano Ermon.
\newblock Estimating high order gradients of the data distribution by denoising.
\newblock {\em Advances in Neural Information Processing Systems}, 34:25359--25369, 2021.

\bibitem{montagna2023causal}
Francesco Montagna, Nicoletta Noceti, Lorenzo Rosasco, Kun Zhang, and Francesco Locatello.
\newblock Causal discovery with score matching on additive models with arbitrary noise.
\newblock In {\em Conference on Causal Learning and Reasoning}, pages 726--751. PMLR, 2023.

\bibitem{montagna2023scalable}
Francesco Montagna, Nicoletta Noceti, Lorenzo Rosasco, Kun Zhang, and Francesco Locatello.
\newblock Scalable causal discovery with score matching.
\newblock {\em arXiv preprint arXiv:2304.03382}, 2023.

\bibitem{pang2020efficient}
Tianyu Pang, Kun Xu, Chongxuan Li, Yang Song, Stefano Ermon, and Jun Zhu.
\newblock Efficient learning of generative models via finite-difference score matching.
\newblock {\em Advances in Neural Information Processing Systems}, 33:19175--19188, 2020.

\bibitem{peters2012identifiability}
Jonas Peters, Joris Mooij, Dominik Janzing, and Bernhard Sch{\"o}lkopf.
\newblock Identifiability of causal graphs using functional models.
\newblock {\em arXiv preprint arXiv:1202.3757}, 2012.

\bibitem{rajendran2024learning}
Goutham Rajendran, Simon Buchholz, Bryon Aragam, Bernhard Sch{\"o}lkopf, and Pradeep Ravikumar.
\newblock Learning interpretable concepts: Unifying causal representation learning and foundation models.
\newblock {\em arXiv preprint arXiv:2402.09236}, 2024.

\bibitem{reisach2021beware}
Alexander~G Reisach, Christof Seiler, and Sebastian Weichwald.
\newblock Beware of the simulated dag! varsortability in additive noise models.
\newblock {\em arXiv preprint arXiv:2102.13647}, 2021.

\bibitem{reizinger2022jacobian}
Patrik Reizinger, Yash Sharma, Matthias Bethge, Bernhard Sch{\"o}lkopf, Ferenc Husz{\'a}r, and Wieland Brendel.
\newblock Jacobian-based causal discovery with nonlinear ica.
\newblock {\em Transactions on Machine Learning Research}, 2022.

\bibitem{rolland2022score}
Paul Rolland, Volkan Cevher, Matth{\"a}us Kleindessner, Chris Russell, Dominik Janzing, Bernhard Sch{\"o}lkopf, and Francesco Locatello.
\newblock Score matching enables causal discovery of nonlinear additive noise models.
\newblock In {\em International Conference on Machine Learning}, pages 18741--18753. PMLR, 2022.

\bibitem{saengkyongam2023identifying}
Sorawit Saengkyongam, Elan Rosenfeld, Pradeep Ravikumar, Niklas Pfister, and Jonas Peters.
\newblock Identifying representations for intervention extrapolation.
\newblock {\em arXiv preprint arXiv:2310.04295}, 2023.

\bibitem{sanchez2022diffusion}
Pedro Sanchez, Xiao Liu, Alison~Q O'Neil, and Sotirios~A Tsaftaris.
\newblock Diffusion models for causal discovery via topological ordering.
\newblock {\em arXiv preprint arXiv:2210.06201}, 2022.

\bibitem{scholkopf2012causal}
Bernhard Sch{\"o}lkopf, Dominik Janzing, Jonas Peters, Eleni Sgouritsa, Kun Zhang, and Joris Mooij.
\newblock On causal and anticausal learning.
\newblock {\em arXiv preprint arXiv:1206.6471}, 2012.

\bibitem{scholkopf2021toward}
Bernhard Sch{\"o}lkopf, Francesco Locatello, Stefan Bauer, Nan~Rosemary Ke, Nal Kalchbrenner, Anirudh Goyal, and Yoshua Bengio.
\newblock Toward causal representation learning.
\newblock {\em Proceedings of the IEEE}, 109(5):612--634, 2021.

\bibitem{silva2006learning}
Ricardo Silva, Richard Scheines, Clark Glymour, Peter Spirtes, and David~Maxwell Chickering.
\newblock Learning the structure of linear latent variable models.
\newblock {\em Journal of Machine Learning Research}, 7(2), 2006.

\bibitem{song2019generative}
Yang Song and Stefano Ermon.
\newblock Generative modeling by estimating gradients of the data distribution.
\newblock {\em Advances in Neural Information Processing Systems}, 32, 2019.

\bibitem{song2020sliced}
Yang Song, Sahaj Garg, Jiaxin Shi, and Stefano Ermon.
\newblock Sliced score matching: A scalable approach to density and score estimation.
\newblock In {\em Uncertainty in Artificial Intelligence}, pages 574--584. PMLR, 2020.

\bibitem{squires2022causal}
Chandler Squires, Anna Seigal, Salil~S Bhate, and Caroline Uhler.
\newblock Linear causal disentanglement via interventions.
\newblock In {\em International Conference on Machine Learning}, pages 32540--32560. PMLR, 2023.

\bibitem{stein1972bound}
Charles Stein.
\newblock A bound for the error in the normal approximation to the distribution of a sum of dependent random variables.
\newblock In {\em Proceedings of the Sixth Berkeley Symposium on Mathematical Statistics and Probability, Volume 2: Probability Theory}, volume~6, pages 583--603. University of California Press, 1972.

\bibitem{sturma2024unpaired}
Nils Sturma, Chandler Squires, Mathias Drton, and Caroline Uhler.
\newblock Unpaired multi-domain causal representation learning.
\newblock {\em Advances in Neural Information Processing Systems}, 36, 2024.

\bibitem{sun2017rank}
Chuangchuang Sun and Ran Dai.
\newblock Rank-constrained optimization and its applications.
\newblock {\em Automatica}, 82:128--136, 2017.

\bibitem{varici2023score}
Burak Varici, Emre Acarturk, Karthikeyan Shanmugam, Abhishek Kumar, and Ali Tajer.
\newblock Score-based causal representation learning with interventions.
\newblock {\em arXiv preprint arXiv:2301.08230}, 2023.

\bibitem{varici2024general}
Burak Varici, Emre Acart{\"u}rk, Karthikeyan Shanmugam, and Ali Tajer.
\newblock General identifiability and achievability for causal representation learning.
\newblock In {\em International Conference on Artificial Intelligence and Statistics}, pages 2314--2322. PMLR, 2024.

\bibitem{varici2024score}
Burak Var{\i}c{\i}, Emre Acart{\"u}rk, Karthikeyan Shanmugam, and Ali Tajer.
\newblock Score-based causal representation learning: Linear and general transformations.
\newblock {\em arXiv preprint arXiv:2402.00849}, 2024.

\bibitem{von2024nonparametric}
Julius von K{\"u}gelgen, Michel Besserve, Liang Wendong, Luigi Gresele, Armin Keki{\'c}, Elias Bareinboim, David Blei, and Bernhard Sch{\"o}lkopf.
\newblock Nonparametric identifiability of causal representations from unknown interventions.
\newblock {\em Advances in Neural Information Processing Systems}, 36, 2024.

\bibitem{wadhwa2021sample}
Samir Wadhwa and Roy Dong.
\newblock On the sample complexity of causal discovery and the value of domain expertise.
\newblock {\em arXiv preprint arXiv:2102.03274}, 2021.

\bibitem{wainberg2018deep}
Michael Wainberg, Daniele Merico, Andrew Delong, and Brendan~J Frey.
\newblock Deep learning in biomedicine.
\newblock {\em Nature Biotechnology}, 36(9):829--838, 2018.

\bibitem{wendong2024causal}
Liang Wendong, Armin Keki{\'c}, Julius von K{\"u}gelgen, Simon Buchholz, Michel Besserve, Luigi Gresele, and Bernhard Sch{\"o}lkopf.
\newblock Causal component analysis.
\newblock {\em Advances in Neural Information Processing Systems}, 36, 2024.

\bibitem{xie2020generalized}
Feng Xie, Ruichu Cai, Biwei Huang, Clark Glymour, Zhifeng Hao, and Kun Zhang.
\newblock Generalized independent noise condition for estimating latent variable causal graphs.
\newblock {\em Advances in Neural Information Processing Systems}, 33:14891--14902, 2020.

\bibitem{xie2022identification}
Feng Xie, Biwei Huang, Zhengming Chen, Yangbo He, Zhi Geng, and Kun Zhang.
\newblock Identification of linear non-gaussian latent hierarchical structure.
\newblock In {\em International Conference on Machine Learning}, pages 24370--24387. PMLR, 2022.

\bibitem{xu2024sparsity}
Danru Xu, Dingling Yao, S{\'e}bastien Lachapelle, Perouz Taslakian, Julius von K{\"u}gelgen, Francesco Locatello, and Sara Magliacane.
\newblock A sparsity principle for partially observable causal representation learning.
\newblock {\em arXiv preprint arXiv:2403.08335}, 2024.

\bibitem{yang2021causalvae}
Mengyue Yang, Furui Liu, Zhitang Chen, Xinwei Shen, Jianye Hao, and Jun Wang.
\newblock Causalvae: Disentangled representation learning via neural structural causal models.
\newblock In {\em Proceedings of the IEEE/CVF Conference on Computer Vision and Pattern Recognition}, pages 9593--9602, 2021.

\bibitem{yao2023multi}
Dingling Yao, Danru Xu, S{\'e}bastien Lachapelle, Sara Magliacane, Perouz Taslakian, Georg Martius, Julius von K{\"u}gelgen, and Francesco Locatello.
\newblock Multi-view causal representation learning with partial observability.
\newblock {\em arXiv preprint arXiv:2311.04056}, 2023.

\bibitem{zhang2024identifiability}
Jiaqi Zhang, Kristjan Greenewald, Chandler Squires, Akash Srivastava, Karthikeyan Shanmugam, and Caroline Uhler.
\newblock Identifiability guarantees for causal disentanglement from soft interventions.
\newblock {\em Advances in Neural Information Processing Systems}, 36, 2024.

\bibitem{zhang2024interpretable}
Yudi Zhang, Yali Du, Biwei Huang, Ziyan Wang, Jun Wang, Meng Fang, and Mykola Pechenizkiy.
\newblock Interpretable reward redistribution in reinforcement learning: A causal approach.
\newblock {\em Advances in Neural Information Processing Systems}, 36, 2024.

\bibitem{zhu2024sample}
Zhenyu Zhu, Francesco Locatello, and Volkan Cevher.
\newblock Sample complexity bounds for score-matching: Causal discovery and generative modeling.
\newblock {\em Advances in Neural Information Processing Systems}, 36, 2024.

\end{thebibliography}







\newpage
\appendix

\section{Proofs for Identifiability}\label{app:A}

\subsection{Proof of Lemma 1}

\begin{proof}
    Given the linear relation $X = H \cdot Z$, we relate the probability density functions $p(\cdot)$ of $Z$ and $p_X(\cdot)$ via
    \begin{align*}
        p_X(x) = p(H^{\dagger}x) |\det(H^{\dagger})|.
    \end{align*}
    Furthermore, we write the gradient of the log density of $p_X(x)$ with respect to $X$ as
     \begin{align*}
        \nabla_X \log p_X(x) &= \frac{\nabla_X p_X(x)}{p_X(x)} \\
        &= \frac{\nabla_X p(H^{\dagger}x) |\det(H^{\dagger})|}{p(H^{\dagger}x) |\det(H^{\dagger})|} \\
        &= \frac{\nabla_X p(H^{\dagger}x)}{p(H^{\dagger}x)} \\
        &= \nabla_X \log p(H^{\dagger}x) \\
        &= (H^{\dagger})^{\top} \nabla_Z \log p(z).
    \end{align*}
    Thus, it follows that $\nabla_Z \log p(z) = H^\top \nabla_X \log p_X(x)$, or $s_Z(z) = H^\top s_X(x)$ as desired.

    Differentiating $\nabla_X \log p_X(x)$ with respect to $X$, we get
    \begin{align*}
        \nabla_X^2 \log p_X(x) &= \nabla_X (H^{\dagger})^{\top} \nabla_Z \log p(z) \\
        &=  (H^{\dagger})^{\top} \nabla^2_Z \log p(z) (H^{\dagger}).
    \end{align*}
    Thus, it additionally follows that $\nabla^2_Z \log p(z) = H^\top (\nabla^2_X \log p_X(x)) H$, or $J_Z(z) = H^\top J_X(x) H$.
\end{proof}



\subsection{Proof of Lemma 2}

Before proceeding to the proof of Lemma~\ref{lem:2}, we must prove the following supplementary lemma.

\begin{lemma}\label{lem:4}
    For any two distinct leaf nodes $k$ and $l$, it follows that $\frac{\partial s_k(z)}{\partial z_l} = 0$.
\end{lemma}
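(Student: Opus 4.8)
The plan is to obtain an explicit formula for the score component $s_k(z) = \partial_{z_k} \log p(z)$ by exploiting the additive Gaussian noise factorization of Assumption~\ref{ass:2}, and then observe that differentiating with respect to a second leaf coordinate annihilates every surviving term. First I would write the log-density as
\begin{align*}
    \log p(z) = \sum_{i=1}^n \left[ -\frac{\big(z_i - f_i(z_{pa(i)})\big)^2}{2\sigma_i^2} - \log\!\big(\sqrt{2\pi}\,\sigma_i\big) \right],
\end{align*}
which follows directly from the Gaussian conditionals $p(z_i \mid z_{pa(i)}) = \mathcal{N}\big(f_i(z_{pa(i)}), \sigma_i^2\big)$.

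Next, I would compute $s_k(z)$ by noting that $z_k$ enters the sum in exactly two ways: through its own summand (the term with $i=k$) and through the mechanisms of its children (the summands with $k \in pa(i)$, i.e.\ $i \in ch(k)$). Using that the DAG has no self-loops ($k \notin pa(k)$), this yields
\begin{align*}
    s_k(z) = -\frac{z_k - f_k(z_{pa(k)})}{\sigma_k^2} + \sum_{i \in ch(k)} \frac{z_i - f_i(z_{pa(i)})}{\sigma_i^2}\,\frac{\partial f_i}{\partial z_k}(z_{pa(i)}).
\end{align*}
Because $k$ is a leaf node, $ch(k) = \varnothing$, so the sum disappears and $s_k(z) = -\big(z_k - f_k(z_{pa(k)})\big)/\sigma_k^2$.

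Finally, I would differentiate this expression with respect to $z_l$. The contribution of $z_k$ vanishes since $k \neq l$ gives $\partial z_k/\partial z_l = 0$, and the contribution of $f_k(z_{pa(k)})$ vanishes because $l$, being a leaf node, has no children and therefore cannot belong to $pa(k)$, so $\partial f_k(z_{pa(k)})/\partial z_l = 0$. Hence $\partial s_k(z)/\partial z_l = 0$, as claimed.

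The argument is essentially bookkeeping, so I do not expect a substantial obstacle; the only point requiring care is tracking precisely which coordinates each summand depends on. The conceptual crux is simply that a leaf node never appears as an argument of any mechanism $f_i$, which is exactly what forces this mixed second derivative to vanish.
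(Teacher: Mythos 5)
Your proof is correct and follows essentially the same route as the paper's: both start from the closed-form score $s_k(z) = -\bigl(z_k - f_k(z_{pa(k)})\bigr)/\sigma_k^2 + \sum_{i \in ch(k)} \frac{\partial f_i}{\partial z_k}\cdot\frac{z_i - f_i}{\sigma_i^2}$ and use $ch(k)=ch(l)=\varnothing$ to kill every term, with the final step resting on $l \notin pa(k)$. The only (cosmetic) difference is that you simplify $s_k$ using the leaf property of $k$ before differentiating, whereas the paper differentiates the general expression first and then specializes.
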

\begin{proof}
    From \cite{rolland2022score}, we denote the score of the latent variable $Z_k$ evaluated at $z$ as
    \begin{align*}
    s_{k}(z) = -\frac{z_k - f_k(z_{pa(k)})}{\sigma_k^2} + \sum_{i \in ch(k)} \frac{\partial f_i(z_{pa(i)})}{\partial z_k} \cdot \frac{z_i - f_i(z_{pa(i)})}{\sigma_i^2}.
    \end{align*}
    We further derive the following expression:
        \begin{align*}
            &~\frac{\partial s_k(z)}{\partial z_l} \\
            &=
            \left(\frac{1}{\sigma_k^2} \right)\left(\frac{\partial f_k(z_{pa(k)})}{\partial z_l} \right) \\
            &+ \sum_{i \in ch(k)} \left[ \left( \frac{\partial^2 f_i(z_{pa(i)})}{\partial z_k \partial z_l}\right) 
            \left( \frac{z_i - f_i(z_{pa(i)})}{\sigma_i^2}\right) +  
            \left( \frac{1}{\sigma_i^2}\right)
            \left( \frac{\partial f_i(z_{pa(i)})}{\partial z_k}\right)
            \left( \frac{\partial z_i}{\partial z_l} - \frac{\partial f_i(z_{pa(i)})}{\partial z_l}\right) 
            \right] \\
            &= \left(\frac{1}{\sigma_k^2} \right)\left(\frac{\partial f_k(z_{pa(k)})}{\partial z_l} \right) \\
            &{  }+ \sum_{i \in ch(k)} \left[  
            \left( \frac{\partial^2 f_i(z_{pa(i)})}{\partial z_k \partial z_l}\right) 
            \left( \frac{z_i - f_i(z_{pa(i)})}{\sigma_i^2}\right) +  
            \left( \frac{1}{\sigma_i^2}\right)
            \left( \frac{\partial f_i(z_{pa(i)})}{\partial z_k}\right)
            \left(\mathbf{1}_{\{l=i\}} - \frac{\partial f_i(z_{pa(i)})}{\partial z_l}\right) 
            \right] \\
            &= \left(\frac{1}{\sigma_k^2} \right)\left(\frac{\partial f_k(z_{pa(k)})}{\partial z_l} \right) + 
            \mathbf{1}_{\{l \in ch(k)\}}\left( \frac{1}{\sigma_l^2} \right)\left(\frac{\partial f_l(z_{pa(l)})}{\partial z_k} \right)  + \\ & \sum_{i \in ch(k) \cap ch(l)} \frac{1}{\sigma_i^2} [ \nabla_k \nabla_l f_i(z_{pa(i)}) \cdot z_i - \nabla_k \nabla_l f_i(z_{pa(i)}) \cdot f_i(z_{pa(i)}) -\nabla_k f_i(z_{pa(i)}) \cdot \nabla_l f_i(z_{pa(i)}) ].
        \end{align*}

        When $k$ and $l$ are distinct leaf nodes, $ch(k) = ch(l) = \varnothing$, and our expression simplifies to
        \begin{align*}
            \frac{\partial s_k(z)}{\partial z_l} &= \left(\frac{1}{\sigma_k^2} \right)\left(\frac{\partial f_k(z_{pa(k)})}{\partial z_l} \right) = 0,
        \end{align*}
       since $l\not\in pa(k)$, which completes our proof.
\end{proof}

We now proceed to the proof of Lemma~\ref{lem:2}.
\begin{proof} (Lemma~\ref{lem:2})
    We first prove the backward direction. Given $J_{\hat{Z}}(\hat{z}) = \beta^{\top} J_Z(z) \beta$, we express $J_{\hat{Z}}(\hat{z})_{ii}$ as
    \begin{align*}
        J_{\hat{Z}}(\hat{z})_{ii} = \sum_{j = 1}^n\sum_{k = 1}^n \beta_{ji} \beta_{ki} \frac{\partial s_j(z)}{\partial z_k}.
    \end{align*}
    Assuming that $\beta_{ji} = 0$ for all $j \notin layer(0)$, the above expression simplifies to 
    \begin{align*}
        J_{\hat{Z}}(\hat{z})_{ii} = \sum_{j,k \in layer(0)}^n \beta_{ji} \beta_{ki} \frac{\partial s_j(z)}{\partial z_k}.
    \end{align*}
    Utilizing Lemma~\ref{lem:4} and Lemma 1 from \cite{rolland2022score}, which states that $\Var\left[\frac{\partial s_k(z)}{\partial z_k} \right] = 0$ for all $k \in layer(0)$, it follows that $\Var\left[ J_{\hat{Z}}(\hat{z})_{ii} \right] = 0$.

    Now, we prove the forward direction. Denote $C_i := \{j : \beta_{ji} \neq 0\}$ as the set of all indices in the $i^{th}$ column of $\beta$ that are non-zero. It suffices to show that if there exists some $k \in C_i$ such that $k \notin layer(0)$, then $\Var\left[ J_{\hat{Z}}(\hat{z})_{ii} \right] \neq 0$.

    Let $i_c$ be the most downstream node in $ch(C_i) :=\{ ch(j) : j \in C_i \}$ such that $i_c \notin pa(i)$ for any $i \in ch(C_i)$. Such a node must exist under the assumption that some non-leaf node is contained in $C_i$. We express $J_{\hat{Z}}(\hat{z})_{ii}$ as follows
    \begin{align*}
        &J_{\hat{Z}}(\hat{z})_{ii}\\
        &=
        \sum_{k,l \in C_i} \beta_{ki}\beta_{li} \frac{\partial s_k(z)}{\partial z_l} = \sum_{k \in C_i} \beta_{ki}^2 \frac{\partial s_k(z)}{\partial z_k} + \sum_{\substack{k,l \in C_i \\ k\neq l}} \beta_{ki} \beta_{li} \frac{\partial s_k(z)}{\partial z_l}\\
        &= \sum_{k \in C_i} \beta_{ki}^2 \left[ -\frac{1}{\sigma_k^2} + \sum_{i \in ch(k)} \frac{1}{\sigma_i^2} \biggl[
        \nabla_k^2 f_i(z_{pa(i)}) \cdot z_i - \nabla_k^2 f_i(z_{pa(i)}) \cdot f_i(z_{pa(i)}) - (\nabla_k f_i(z_{pa(i)}))^2) \biggr] \right] 
        \\ & +
        \sum_{\substack{k,l \in C_i \\ k\neq l}} \beta_{ki} \beta_{li} \biggl[
        \left(\frac{1}{\sigma_k^2} \right)\left(\frac{\partial f_k(z_{pa(k)})}{\partial z_l} \right) + {1}_{\{l \in ch(k)\}}\left( \frac{1}{\sigma_l^2} \right)\left(\frac{\partial f_l(z_{pa(l)})}{\partial z_k} \right) +
        \\ &
        \sum_{i \in ch(k) \cap ch(l)} \biggl[ \frac{1}{\sigma_i^2} \nabla_k \nabla_l f_i(z_{pa(i)}) \cdot z_i - \nabla_k \nabla_l f_i(z_{pa(i)}) \cdot f_i(z_{pa(i)}) -\nabla_k f_i(z_{pa(i)}) \cdot \nabla_l f_i(z_{pa(i)}) \biggr] \biggr] \\
        &=\sum_{k \in C_i} \beta_{ki}^2 \left[ -\frac{1}{\sigma_k^2} + \sum_{i \in ch(k)} \frac{1}{\sigma_i^2} \biggl[
        \nabla_k^2 f_i(z_{pa(i)}) \cdot \mathcal{E}_i  - (\nabla_k f_i(z_{pa(i)}))^2) \biggr] \right] + \\ &
        \sum_{\substack{k,l \in C_i \\ k\neq l}} \beta_{ki} \beta_{li} \biggl[
        \left(\frac{1}{\sigma_k^2} \right)\left(\frac{\partial f_k(z_{pa(k)})}{\partial z_l} \right) + {1}_{\{l \in ch(k)\}}\left( \frac{1}{\sigma_l^2} \right)\left(\frac{\partial f_l(z_{pa(l)})}{\partial z_k} \right) +\\ &
        \sum_{i \in ch(k) \cap ch(l)} \biggl[ \frac{1}{\sigma_i^2} \nabla_k \nabla_l f_i(z_{pa(i)}) \cdot \mathcal{E}_i 
        -\nabla_k f_i(z_{pa(i)}) \cdot \nabla_l f_i(z_{pa(i)}) 
         \biggr] \biggr].
    \end{align*}

    Now, by separating the terms containing $i_c$, we get
    \begin{align}
    J_{\hat{Z}}(\hat{z})_{ii} &= 
    \sum_{\substack{k \in C_i\\ k \in pa(i_c)}} \left(\beta_{ki}^2\right)\left(\frac{1}{\sigma_{i_c}^2}\right)\left(\nabla_k^2 f_{i_c}(z_{pa(i_c)})\right) \left(\mathcal{E}_{i_c}\right) + \\
    & \sum_{\substack{k, l \in C_i \\ k \neq l \\ k,l \in pa(i_c)}} \left(\beta_{ki} \beta_{li}\right)\left(\frac{1}{\sigma_{i_c}^2}\right)\left(\nabla_k \nabla_l f_{i_c}(z_{pa(i_c)})\right) \left(\mathcal{E}_{i_c}\right)  - \\
    & \sum_{\substack{k \in C_i \\ k \in pa(i_c)}} \left(\beta_{ki}^2\right)\left(\frac{1}{\sigma_{i_c}^2}\right)\left(\nabla_k f_{i_c}(z_{pa(i_c)})\right)^2  - \\
    &\sum_{\substack{k, l \in C_i \\ k \neq l \\ k,l \in pa(i_c)}} \left(\beta_{ki} \beta_{li}\right)\left(\frac{1}{\sigma_{i_c}^2}\right)\left(\nabla_k f_{i_c}(z_{pa(i_c)})\right) \left(\nabla_l \nabla_l f_{i_c}(z_{pa(i_c)})\right) + \\
    &\sum_{k \in C_i} \beta_{ki}^2 \left[ -\frac{1}{\sigma_k^2} + \sum_{\substack{i \in ch(k) \\ i \neq i_c}} \frac{1}{\sigma_i^2} \biggl[\nabla_k^2 f_i(z_{pa(i)}) \cdot \mathcal{E}_i  - (\nabla_k f_i(z_{pa(i)}))^2) \biggr] \right] + \\
    &\sum_{\substack{k \in C_i \\ k\neq l}} \beta_{ki} \beta_{li} \biggl[\left(\frac{1}{\sigma_k^2} \right) \left(\nabla_l f_k(z_{pa(k)})\right) + {1}_{\{l \in ch(k)\}}\left( \frac{1}{\sigma_l^2} \right)\left( \nabla_k f_l(z_{pa(l)}) \right) + \\
    &\sum_{\substack{i \in ch(k) \cap ch(l) \\ i \neq i_c}} \biggl[ \frac{1}{\sigma_i^2} \nabla_k \nabla_l f_i(z_{pa(i)}) \cdot \mathcal{E}_i 
    -\nabla_k f_i(z_{pa(i)}) \cdot \nabla_l f_i(z_{pa(i)}) 
     \biggr] \biggr].
\end{align}

    Let $g({z \_ }_{i_c}) =$ (5) + (6) + (7) + (8) + (9). Given that (5), (6), (7), (8) and  (9) are functions of only variables upstream of $Z_{i_c}$, we have that $g(z_{\_ i_c}) \perp\!\!\!\perp \mathcal{E}_{i_c}$. Furthermore, let
    \begin{align*}
        h({z \_ }_{i_c}) = \sum_{\substack{k,l \in C_i \\ k,l \in pa(i_c)}} \left(\beta_{ki}\beta_{li}\right)
        \left( \frac{1}{\sigma_{i_c}^2} \right)
        \left(\nabla_k \nabla_l f_{i_c}(z_{pa(i_c)})  \right),
    \end{align*}
    which similarly contains only variables upstream of $z_{i_c}$. Thus it holds that $h({z \_ }_{i_c}) \perp\!\!\!\perp \mathcal{E}_{i_c}$. Now we write
    \begin{align*}
         J_{\hat{Z}}(\hat{z})_{ii} &= h({z \_ }_{i_c}) \cdot \mathcal{E}_{i_c} + g({z \_ }_{i_c}),
    \end{align*}
    and it suffices to show that $Var [h({z \_ }_{i_c}) \cdot \mathcal{E}_{i_c} + g({z \_ }_{i_c})] \neq 0$. Expanding this expression, we get
    \begin{align*}
        Var [h({z \_ }_{i_c}) \cdot \mathcal{E}_{i_c} + g({z \_ }_{i_c})] &= Var[h({z \_ }_{i_c}) \cdot \mathcal{E}_{i_c}] + Var[g({z \_ }_{i_c})] + 2Cov[h({z \_ }_{i_c}) \cdot \mathcal{E}_{i_c}, g({z \_ }_{i_c})] \\
        &= \mathbb{E}[h({z \_ }_{i_c})^2 \mathcal{E}_{i_c}^2] - \mathbb{E}[[h({z \_ }_{i_c})\mathcal{E}_{i_c}]^2 + Var(g) + 2\mathbb{E}[h({z \_ }_{i_c})\mathcal{E}_{i_c}g({z \_ }_{i_c})] \\
        & - 2\mathbb{E}[h({z \_ }_{i_c})\mathcal{E}_{i_c}] \mathbb{E}[g({z \_ }_{i_c})] \\
        &= \mathbb{E}[h({z \_ }_{i_c})^2]\mathbb{E}[\mathcal{E}_{i_c}^2] + Var(g) \\
        &= Var[h({z \_ }_{i_c})] \sigma_{i_c}^2 + \mathbb{E}[h({z \_ }_{i_c})]^2 \sigma_{i_c}^2+ Var(g).
    \end{align*}
    Therefore, the variance of $J_{\hat{Z}}(\hat{z})_{ii}$ is positive if and only if $Var[h({z \_ }_{i_c})] > 0$, $\mathbb{E}[h({z \_ }_{i_c})] \neq 0$, or $Var(g({z \_ }_{i_c})) > 0$. We will now show that $Var[h({z \_ }_{i_c})] > 0$ or $\mathbb{E}[h({z \_ }_{i_c})] \neq 0$.

    Let us rewrite $h({z \_ }_{i_c})$ in matrix form as $h({z \_ }_{i_c}) \propto (\beta_i')^\top \partial_{z_{pa(i_c)},z_{pa(i_c)}}^2f_{i_c}(z_{pa(i_c)})(\beta_i') = \partial_{\beta_i',\beta_i'}^2f_{i_c}(z_{pa(i_c)})$, where $\beta_i'\in \mathbb{R}^{|pa(i_c)|}$ is a vector formed by finding the entries that correspond to $pa(i_c)$ in $\beta$. Note that by our assumption $pa(i_c)\cap C_i\neq\varnothing$, we thus have $\beta'_i\neq 0$.
    By the directional non-linear assumption in Assumption~\ref{ass:2}, we know that $\partial_{\beta_i',\beta_i'}^2f_{i_c}(z_{pa(i_c)})$ cannot be $0$ for all realizations of $z_{pa(i_c)}$. 
    This implies that $\mathbb{P}[h({z \_ }_{i_c}) = 0] \neq 1$, which further implies that either $Var[h({z \_ }_{i_c})] > 0$ or $\mathbb{E}[h({z \_ }_{i_c})] \neq 0$ as desired.
\end{proof}

\subsection{Proof of Lemma 3}

\begin{proof}
    Without loss of generality, assume that $layer(0)$ corresponds to the the last $l$ indexed latent variables. We claim there must be exactly $l$ such columns of $\beta$ containing zero entries in every element corresponding to non-leaf nodes, or $\hat{H}$ could not be an optimal solution of Equation~(\ref{eq:opt}). We write $\beta$ in block form as
    $$
    \beta = 
    \begin{pmatrix}
        A & 0 \\
        B & C \\
    \end{pmatrix},
    $$
    where $A$ is $d-l \times d-l$, $B$ is $d - l \times l$, ad $C$ is $l \times l$. We note that the columns of $\beta$ need not be ordered in this way to achieve our result and that we assume this structure to simplify notation. We derive the inverse of $\beta$ via taking the Schur complement as follows
    $$
    \beta^{-1} = 
    \begin{pmatrix}
        A^{-1} & 0 \\
        -C^{-1}BA^{-1} & C^{-1} \\
    \end{pmatrix},
    $$
    where $A^{-1}$ and $C^{-1}$ are full column rank. Now taking, $\hat{Z} = \beta^{-1}Z$, we derive the desired equalities
    \begin{align*}
        \hat{Z}_i = 
        \begin{cases}
            A^{-1}_i (Z_0, ..., Z_{d-l}) = linear(Z_{\text{non-leafs}}), & \text{if  } Var[J_{\hat{Z}}(\hat{Z}(X))]]_{ii} \neq 0 \\
            -C^{-1}BA^{-1}_i(Z_0, ..., Z_{d-l}) + C^{-1}_i (Z_{d-l}, ..., Z_n) = linear(Z) & \text{if } Var[J_{\hat{Z}}(\hat{Z}(X))]]_{ii} = 0,
        \end{cases}
    \end{align*}
    thereby completing the proof.
\end{proof}

\subsection{Proof of Theorem 1}
\begin{proof}
    Assume without loss of generality that the latent variables $Z$ are reverse layerly ordered
    such that $layer(0) = \{Z_0, ..., Z_{l_0}\}$, $layer(1) = \{Z_{l_0 + 1}, ..., Z_{l_1}\}$, and so on. Solving for $\hat{H}$ according to the optimization problem framed in Equation~(\ref{eq:opt}), it follows from Lemma~\ref{lem:3} that
    \begin{align*}
        \hat{Z}_i = 
        \begin{cases}
            linear(Z_{0},...,Z_d), & \text{if  } Var[J_{\hat{Z}}(\hat{Z}(X))]]_{ii} = 0 \\
            linear(Z_{l_0+1},...,Z_d) , & \text{if  } Var[J_{\hat{Z}}(\hat{Z}(X))]]_{ii} \neq 0 ,
        \end{cases}
    \end{align*}
    where $\{\hat{Z_i} : Var[J_{\hat{Z}}(\hat{Z}(X))]]_{ii} = 0\}$ denotes the representations of $layer(0)$ variables up to upstream layers. Now, if we denote $X'$ as the set of vectors in $\{\hat{Z_i} : Var[J_{\hat{Z}}(\hat{Z}(X))]]_{ii} \neq 0\}$, it follows that
    \begin{align*}
        X' = H' (Z_{l_0+1}, ..., Z_d),
    \end{align*}
    where $H'$ is a full column rank matrix. Thus, viewing $X'$ as our observations of the true non-leaf latent variables, we can utilize Lemma~\ref{lem:3} to show that we can recover the $layer(1)$ up to its upstream layer representation. Continuing this method of pruning at each iteration, it is clear to see that we can derive the upstream layer representation for all variables.
\end{proof}

\subsection{Proof of Theorem 2}

\begin{proof}
    Assume there are $n$ distinct layers of $\mathcal{G}$. From Definition~\ref{def:def1}, it follows that $\hat{\mathcal{E}}_{layer(n)} = \hat{Z}_{layer(n)}$, given $pa(i) = \emptyset$ for all $i \in layer(n)$.

    Now, considering the next layer, namely $layer(n-1)$, we can express $\hat{Z}_{layer(n-1)}$ as follows given the principle in Assumption~\ref{ass:2}, 
    \begin{align*}
        \hat{Z}_{layer(n-1)} = \hat{\mathcal{E}}_{layer(n-1)} + \textsc{Nonlinear}(\hat{Z}_{layer(n)}),
    \end{align*}
    where $\textsc{Nonlinear}(\hat{Z}_{layer(n)})$ denotes the nonlinear relationship between $\hat{Z}_{layer(n-1)}$ and $Z_{pa(layer(n-1))} \in Z_{layer(n)}$ specified by the linear combination of nonlinear functions $\{f_i : i \in layer(n-1) \}$. However, given $\hat{\mathcal{E}}_{layer(n)} = \hat{Z}_{layer(n)}$, $\textsc{Nonlinear}(\hat{Z}_{layer(n)}) = \textsc{Nonlinear}(\hat{\mathcal{E}}_{layer(n)})$ can be determined. Thus, it follows we can learn $\hat{\mathcal{E}}_{layer(n-1)}$ as the residual of the nonlinear regression of $\hat{Z}_{layer(n-1)}$ on $\hat{\mathcal{E}}_{layer(n)}$ as
    \begin{align*}
        \hat{\mathcal{E}}_{layer(n-1)} = \hat{Z}_{layer(n-1)} - \textsc{Nonlinear}(\hat{\mathcal{E}}_{layer(n)}).
    \end{align*}
    Now generalizing to layer $n-i$, we can express $\hat{Z}_{layer(n-i)}$ as
    \begin{align*}
        \hat{Z}_{layer(n-i)} = \hat{\mathcal{E}}_{layer(n-i)} + \textsc{Nonlinear}(\hat{\mathcal{E}}_{layer(n-i+1)}, ..., \hat{\mathcal{E}}_{layer(n)}).
    \end{align*}
    Therefore, by the same principle, we can determine $\hat{\mathcal{E}}_{layer(n-i)}$ as the residual of the nonlinear regression of $\hat{Z}_{layer(n-i)}$ on $\hat{\mathcal{E}}_{layer(n-i+1)}, ..., \hat{\mathcal{E}}_{layer(n)}$ as 
    \begin{align*}
        \hat{\mathcal{E}}_{layer(n-i)} = \hat{Z}_{layer(n-i)} - \textsc{Nonlinear}(\hat{\mathcal{E}}_{layer(n-i+1)}, ..., \hat{\mathcal{E}}_{layer(n)}).
    \end{align*}
    Therefore, we can determine $\hat{\mathcal{E}}_{layer(n-i)}$ for all $i = 0,...,n$.
\end{proof}

\section{Derivation of Algorithms}\label{app:B}

    We show that the optimization problem in Equation~(\ref{eq:opt}) can equivalently be solved by solving the QCQP in Equation~(\ref{eq:qcqp}) for each column sequentially.
    Given that the $i^{th}$ element of $diag(H^\top J_X(X) H)$ can be expressed as
    \begin{align*}
        diag(\hat{H}^\top J_X(X) \hat{H})_i = [\hat{H}_i]^\top J_X(X) [\hat{H}_i],
    \end{align*}
    we can naturally break our optimization problem into sub-problems of solving for the optimal column vector $[H_i]$ that results in zero variance for the term above. We will now determine an equivalent expression for the variance of this term in terms of a given vector $v \in \mathbb{R}^d$:
    \begin{align*}
        Var(v^\top J_X(X) v) &= \mathbb{E}\left[\left(v^\top J_X(X) v\right)^2 \right] - \mathbb{E}\left[ v^\top J_X(X) v \right]^2 \\
        &= \frac{1}{n} \sum_{i=1}^n \left(v^\top J_X(x^{(i)}) v\right)^2 - \left(\frac{1}{n} \sum_{i=1}^n v^\top J_X(x^{(i)}) v \right)^2 \\
        &= \frac{1}{n} \sum_{i=1}^n \left(v^\top J_X(x^{(i)}) v\right)^2 -\left(v^\top\bar{J}_X(X)v\right)^2 \\
        &= \frac{1}{n} \sum_{i=1}^n \left( \left(v^\top J_X(x^{(i)}) v\right)^2 -\left(v^\top\bar{J}_X(X)v\right)^2 \right) \\
        &= \frac{1}{n} \sum_{i=1}^n \left( \left(v^\top J_X(x^{(i)}) v\right)^2 - 2v^\top J_X(x^{(i)})vv^\top\bar{J}_X(X)v  +\left(v^\top\bar{J}_X(X)v\right)^2 \right) \\
        & \:\:\:\:\:\:\: + \frac{1}{n} \sum_{i=1}^n \left( 2v^\top J_X(x^{(i)})vv^\top \bar{J}_X(X)v  + 2\left(v^\top \bar{J}_X(X)v\right)^2 \right) \\
        &= \frac{1}{n} \sum_{i=1}^n \left( v^\top J_X(x^{(i)})v - v^\top\bar{J}_X(X)v \right)^2 \\
        & \:\:\:\:\:\:\: + 2\left(v^\top \bar{J}_X(X)v\right)^2 - 2\left(v^\top \bar{J}_X(X)v\right)^2 \\
        &= \frac{1}{n} \sum_{i=1}^n \left( v^\top (\Tilde{J}_X(x^{(i)})) v \right)^2.
    \end{align*}
    With this reformulation, the following implication clearly follows:
    \begin{align}
        Var(v^\top J_X(X) v) = 0 \iff v^\top \Tilde{J}_X(x_i) v = 0 \: \: \: \forall i = 1,...,n.
    \end{align}
    This indicates that the first constraint in the 
    QCQP from Equation~(\ref{eq:qcqp}) solves for a column vector that results in a zero variance term in the diagonal of the estimated Jacobian matrix, as desired. The additional constraints ensure that all of the column vectors added to $\hat{H}$ are linearly independent, fulfilling the full column rank constraint from Equation~(\ref{eq:opt}). Thus, continuously solving this QCQP is equivalent to solving the rank-constrained optimization problem from Equation~(\ref{eq:opt}).

\section{Details of Experiments}\label{app:C}

\subsection{Synthetic Data Sampling Procedure}\label{app:C1}

We establish the causal relationships between all nodes and their parents to follow the parametric function $f_i(X) = ||X||^2 + \mathcal{E}_i$, where each $\mathcal{E}_i$ is independently sampled from a mean-zero Gaussian distribution with variance uniformly distributed over [0.1, 1]. For each experiment, we generate random samples of the exogenous noise terms and produce the latent variables via the data generating procedure from Assumption~\ref{ass:2}. We perform min-max scaling such that every variable is within the range $[0,1]$. We perform this scaling, since \cite{reisach2021beware} warns of the fact that valid causal orderings can often be recovered by order of the variables' variance in synthetically generated data, and we wish to show our algorithm performs as desired without this seeming advantage.
We randomly sampled $n \times n$ full rank matrices and took the linear transformation of the latent variables on this matrix to derive the corresponding observational samples.

 \subsection{Implementation of QCQP Solver}

    \subsubsection{Perfect Score Estimation}
    With perfect score estimation, we solve each QCQP to global optimality efficiently using Gurobi optimization solvers \cite{gurobi} on an Apple M2 CPU with 8 cores. We continuously solve the QCQP indicated by Equation~(\ref{eq:qcqp}) with feasibility tolerance set to 0.001, until no feasible solutions remain. We continue by iteratively appending linearly independent column vectors of unit magnitude until $\hat{H}$ is of the desired dimension.

    \subsubsection{Score Estimation}
    When using data-driven score estimation methods, we are not guaranteed to find any column vector that solves the specific QCQP indicated by Equation~(\ref{eq:qcqp}) perfectly. To combat this challenge, we first prune the top 25\% of de-meaned Jacobian estimates, $\Tilde{J}_X(x)$, by Frobenius norm to remove outliers. We then solve for the minimum value $t$ such that $|v^\top \Tilde{J}_X(x^{(m)}) v| \leq t$ optimizing over all $v \in \mathbb{R}^d$. Then, we use the same procedure as in the perfect score estimation case with feasibility tolerance set to $t + 0.001$ to solve for the estimated matrix $\hat{H}$.

\subsection{Implementation of Score Estimation}

\textbf{Stein Estimator.} We implemented the second-order Stein estimator introduced in \cite{bellec2021second} to generate the point-wise estimates of the score's Jacobian matrices. We use RBF kernels with bandwidth value selected as the median of pairwise distances between points in $X$. Our implementation is adapted from \cite{Li_Dodiscover_Causal_discovery} and \cite{rolland2022score}.

\textbf{SSM-VR Estimator.} We implemented the sliced score matching with variance reduction (SSM-VR) model developed by \cite{song2020sliced} to generate functional score estimators. We then estimated the Jacobian of the score estimator using automatic differentiation. Our implementation is adapted from \cite{varici2024general}.

\end{document}